\newtheorem{theorem}{Theorem}
\newtheorem{lemma}[theorem]{Lemma}
\newtheorem{corollary}[theorem]{Corollary}
\newcommand{\NSGA}{\mbox{NSGA}\nobreakdash-II\xspace}
\newcommand{\NSGAT}{\mbox{NSGA}\nobreakdash-III\xspace}
\newcommand{\NSGAthree}{\mbox{NSGA}\nobreakdash-III\xspace}
\newcommand{\NSGAD}{\mbox{NSGA}\nobreakdash-II-T\xspace}
\newcommand{\SMS}{\mbox{SMS-EMOA}\xspace}
\DeclareMathOperator{\tcd}{tCD}
\DeclareMathOperator{\dCD}{\tcd}
\DeclareMathOperator{\cDis}{cDis}
\newcommand{\omm}{\textsc{OneMinMax}\xspace}
\newcommand{\momm}{m\textsc{OneMinMax}\xspace}
\newcommand{\cocz}{\textsc{COCZ}\xspace}
\newcommand{\lotz}{\textsc{LOTZ}\xspace}
\newcommand{\ojzj}{\textsc{OneJumpZeroJump}\xspace}
\newcommand{\mojzj}{m\textsc{OJZJ}\xspace}
\newcommand{\mcocz}{m\textsc{COCZ}\xspace}
\newcommand{\mlotz}{m\textsc{LOTZ}\xspace}
\newcommand{\DLTB}{\textsc{DLTB}\xspace}
\newcommand{\mei}{\textsc{MEI}\xspace}
\DeclareMathOperator{\polylog}{polylog}
\newcommand{\N}{\ensuremath{\mathbb{N}}} 
\let\originalleft\left
\let\originalright\right
\renewcommand{\left}{\mathopen{}\mathclose\bgroup\originalleft}
\renewcommand{\right}{\aftergroup\egroup\originalright}
\begin{document}
\title{A Crowding Distance That Provably Solves the Difficulties of the NSGA-II in Many-Objective Optimization}

\author{Weijie Zheng\thanks{School of Computer Science and Technology, International Research Institute for Artificial Intelligence, Harbin Institute of Technology, Shenzhen, China} \and Yan Gao\footnotemark[1] \and Benjamin Doerr\thanks{Laboratoire d'Informatique (LIX), CNRS, \'Ecole Polytechnique, Institut Polytechnique de Paris, Palaiseau, France}~\thanks{Corresponding author}}


%

\maketitle

\sloppy{

\begin{abstract}
Recent theoretical works have shown that the NSGA-II can have enormous difficulties to solve problems with more than two objectives. In contrast, algorithms like the NSGA-III or SMS-EMOA, differing from the NSGA-II only in the secondary selection criterion, provably perform well in these situations. 

To remedy this shortcoming of the NSGA-II, but at the same time keep the advantages of the widely accepted crowding distance, we use the insights of these previous work to define a variant of the crowding distance, called truthful crowding distance. Different from the classic crowding distance, it has for any number of objectives the desirable property that a small crowding distance value indicates that some other solution has a similar objective vector.

Building on this property, we conduct mathematical runtime analyses for the NSGA-II with truthful crowding distance. We show that this algorithm can solve the many-objective versions of the \omm, \cocz, \lotz, and $\ojzj_k$ problems in the same (polynomial) asymptotic runtimes as the NSGA-III and the SMS-EMOA. This contrasts the exponential lower bounds previously shown for the classic NSGA-II. For the bi-objective versions of these problems, our NSGA-II has a similar performance as the classic NSGA-II, gaining however from smaller admissible population sizes. For the bi-objective \omm problem, we also observe a (minimally) better performance in approximating the Pareto front.

These results suggest that our truthful version of the NSGA-II has the same good performance as the classic NSGA-II in two objectives, but can resolve the drastic problems in more than two objectives.

\end{abstract}

\section{Introduction}

In many practical applications, the problems to be solved have several, often conflicting objectives. Since such problems often do not have a single optimal solution, one resorts to computing a set of diverse good solutions (ideally so-called Pareto optima) and lets a human decision maker take the final decision among these.

One of the most successful algorithm for computing such a set of solutions for a multi-objective optimization problem is the \emph{Non-dominated Sorting Genetic Algorithm II (\NSGA)} by Deb et al.~\cite{DebPAM02}, currently cited more than 50,000 times according to Google scholar. 

While it was always known that the performance of this algorithm becomes weaker with increasing numbers of objectives -- this was the main motivation for Deb and Jain~\cite{DebJ14} to propose the \NSGAthree{}  --, very recent mathematical analyses of multi-objective evolutionary algorithms (MOEAs) could quantify and obtain a deeper understanding of this shortcoming. In~\cite{ZhengD23many}, it was proven that the \NSGA with any population size linear in the Pareto front size cannot optimize the simplistic \omm benchmark in subexponential time when the number of objectives is at least three (for two objectives, a small polynomial runtime guarantee was proven by Zheng and Doerr~\cite{ZhengD23aij}). In contrast, for the \NSGAthree and the \SMS, two algorithms differing from the \NSGA only in that the crowding distance is replaced by a different secondary selection criterion, polynomial runtime guarantees could be proven for the \omm and several other benchmarks in any (constant) number of objectives~\cite{WiethegerD23,OprisDNS24,ZhengD24,WiethegerD24}. This different optimization behavior suggests that it is the crowding distance which is the root for the problems of the \NSGA in higher numbers of objectives. 

Given that the \NSGA is the by far dominant MOEA in practice, clearly beating the \NSGAthree (cited less than 6,000 times according to Google scholar) and the \SMS (cited less than 2,200 times), and speculating that practitioners prefer working with a variant of the \NSGA rather than switching to a different algorithm (and also noting that the \NSGAthree and \SMS have some known shortcomings the \NSGA does not have), in this work we propose to use the \NSGA unchanged apart from a mild modification to the crowding distance. This change will again build on insights from Zheng and Doerr~\cite{ZhengD23many}. We defer the technical details to a separate section below 
and state here only that we call our crowding distance \emph{truthful crowding distance} since we feel that it better reflects how close a solution is to others.

Given that we build mostly on previous works of mathematical nature, and strongly profited from the precision of such results, we analyze the \NSGA with truthful crowding distance also via mathematical means. Our \textbf{main results} are the following. (i) For the standard many-objective benchmarks \momm, \mcocz, \mlotz, and \mojzj, the \NSGA with truthful crowding distance computed the whole Pareto front efficiently, in asymptotically the same time as the \NSGAthree or the \SMS. This demonstrates clearly that it was in fact a weakness of the original crowding distance that led to the drastic problems observed for the \NSGA in many-objective optimization. (ii) For the bi-objective versions of these benchmarks, for which the \NSGA was efficient, we show that the \NSGA with truthful crowding distance is equally efficient, and this already from population sizes on that equal the Pareto front of the problem, whereas the previous results needed a population size some constant factor larger. (iii) We also regard the problem of approximating the Pareto front when the population size is too small to cover the full Pareto front. Here we show that our \NSGA with sequential selection, in an analogous way as the sequential version of the classic \NSGA, computes good approximations to the Pareto front of the bi-objective \omm problem (the approximation quality is minimally better for our algorithm). 

In summary, these results show that the \NSGA with truthful crowding distance overcomes the difficulties of the classic \NSGA in many-objective optimization, but preserves its good performance in bi-objective optimization.

\section{Preliminaries}

In this work, we discuss variants of the \NSGA, the most prominent multi-objective evolutionary algorithm and one of the most successful approaches to solve multi-objective optimization problems. 

A \emph{multi-objective optimization problem} is a tuple $f = (f_1, \dots, f_m)$ of functions defined on a common \emph{search space}. As common in discrete evolutionary optimization, we always consider the search space $\{0,1\}^n$. We call $n$ the problem size. When using \emph{asymptotic notation}, this will be with respect to $n$ tending to infinity. 

Also always, our goal will be to \emph{maximize}~$f$. Since the individual objective $f_i$ might be conflicting, we usually do not have a single solution maximizing all $f_i$. In this case, the best we can hope for are solutions that are not strictly dominate by others. We say that a solution $x$ \emph{dominates} a solution $y$, written as $x \succeq y$, if $f_i(x) \ge f_i(y)$ for all $i \in [1..m]$. If in addition one of these inequalities is strict, we speak of \emph{strict domination}, denoted by $\succ$. The \emph{Pareto set} of a problem is the set of solutions that are not strictly dominated by another solution; the set of their objective values is called the \emph{Pareto front}. 

A common solution concept to multi-objective problems is to compute a small set $S$ of solutions such that $f(S)$ is the Pareto front or approximates it in some sense. The idea is that a human decision maker, based on preferences not included in the problem formulation, can then select from $S$ the final solution. 

For this problem, evolutionary algorithms have been employed with great success~\cite{CoelloLV07,ZhouQLZSZ11}. The by far dominant algorithm among these \emph{multi-objective evolutionary algorithms (MOEAs)} is the \emph{non-dominated sorting genetic algorithms II (\NSGA)} proposed by Deb et al.~\cite{DebPAM02}. This algorithm works with a population $P$, initialized randomly, of fixed size $N$. Each iteration of the main optimization loop consist of creating $N$ new solutions from these parents (``offspring'') and selecting from the combined parent and offspring population the next population of $N$ individuals.

Various ways of creating the offspring have been used. We shall regard random parent selection (each offspring is generated from randomly chosen parents) and fair parent selection (only with mutation, here from each parent and offspring is generated via mutation), 1-bit and bit-wise mutation with mutation rate~$1/n$, and uniform crossover. When using crossover, we assume that there is a positive constant $p<1$ (crossover rate) and in each iteration, with probability~$p$ and offspring is created via crossover, else via mutation. Binary tournament parent selection has also been studied, but the existing mathematical results, see, e.g., \cite{ZhengD23aij} suggest that it does not lead to substantially different results, but only to more complicated analyses. 

More important and characteristic for the \NSGA is the selection of the next parent population. The most important selection criterion is \emph{non-dominated sorting}, that is, the combined parent and offspring population $R$ is partitioned into fronts $F_1, F_2, \dots$ such that $F_i$ consist of all non-dominated elements (that is, elements not strictly dominated by another one) of $R \setminus (F_1 \cup \dots \cup F_{i-1})$. Individuals in an earlier front are preferred in the selection of the next population, that is, for the maximum $i^*$ such that $F_1 \cup \dots \cup F_{i^*-1}$ contains less than $N$ elements, these fronts all fully go into the next population. The remaining elements are selected from the \emph{critical front} $F_{i^*}$ using a secondary criterion, which is the crowding distance for the \NSGA. We defer the precise definition of the crowding distance to the subsequent section. 
We note that the non-dominated-sorting partition is uniquely defined and can be computed in time $O(m|R|^2)$. The crowding distance depends on how ties in sortings are broken, the crowding distance of all individuals in $F_{i^*}$ can be computed very efficiently in time $O(m |F_{i^*}| \log |F_{i^*}|)$.

The pseudocode of the \NSGA can be found in Algorithms~\ref{alg:NSGAD}, where we note that the presentation here is optimized for a uniform treatment of this \NSGA and the variant with sequential selection to be discussed now. 

Noting that the removal of an individual changes the crowding distance of the remaining individuals, Kukkonen and Deb~\cite{KukkonenD06} proposed to take into account this change, that is, to \emph{sequentially remove individuals} with smallest crowding distance and update the crowding distance of the remaining individuals. This was shown to give superior results in the empirical study~\cite{KukkonenD06} and the mathematical analysis~\cite{ZhengD24approx}. 

\begin{algorithm}[!ht]
    \caption{The \NSGA algorithm in its classic version~\cite{DebPAM02} and with sequential selection~\cite{KukkonenD06}. When replacing in either version the original crowding distance with the truthful crowding distance proposed in this work, we obtain our truthful (sequential) \NSGAD.}
    \begin{algorithmic}[1]
    \STATE {Uniformly at random generate the initial population $P_0=\{x_1,x_2,\dots,x_N\}$ with $x_i\in\{0,1\}^n,i=1,2,\dots,N.$}\label{ste:initialize}
    \FOR{$t = 0, 1, 2, \dots$} \label{ste:iterate}
    \STATE {Generate the offspring population $Q_t$ with size $N$}\label{ste:generate}
    \STATE {Use non-dominated sorting to divide $R_t=P_t\cup Q_t$ into $F_1,F_2,\dots$}
    \label{ste:sort}
    \STATE {Find $i^* \ge 1$ such that $\sum_{i=1}^{i^*-1}|F_i| < N$ and $\sum_{i=1}^{i^*}|F_i| \ge N$. Let $N_r:=\sum_{i=1}^{i^*}|F_i| - N$}\label{ste:rank}
    \STATE{Compute the crowding distance of each individual in $F_{i^*}$}\label{ste:inid}
    \FOR{$d=1,\dots,N_r$}\label{stp:start}
    \STATE{Remove one individual in $F_{i^*}$ with the smallest crowding distance, chosen at random in case of a tie}
    \STATE{For sequential selection only: Update the crowding distances of the individuals affected by the removal}
    \ENDFOR\label{stp:end}
    \STATE {$P_{t+1}:=\bigcup_{i=1}^{i^*}F_i$}\label{ste:new parents}
    \ENDFOR 
    \end{algorithmic}
    \label{alg:NSGAD}
\end{algorithm}

\section{Classic and Truthful Crowding Distance}\label{sec:dCD}

In this section, we first describe the original crowding distance used in the \NSGA of Deb et al.~\cite{DebPAM02} and compare it with other ways to select a subset of individuals from the critical front of a non-dominated sorting (secondary selection criterion). This comparison motivates the development of a modification of the crowding distance, called \emph{truthful crowding distance}, done in the second half of this section.

\subsection{Original Crowding Distance}

When selecting the next population, the \NSGA, {\NSGAthree}, and \SMS first perform non-dominated sorting, resulting in a partition $F_1, F_2, \dots$ of the combined parent and offspring population into fronts of pair-wise non-dominated individuals. For a suitable number $i^*$, the first $i^*-1$ fronts are all taken into the next population; from $F_{i^*}$ a subset is selecting according to a secondary criterion. 

For the \NSGA, this secondary criterion is the crowding distance. The crowding distance of an individual $x$ in a set $S$ is the sum, over all objectives, of the normalized distances of the two neighboring objective values. 
Formally, let $m$ be the number of objectives and let $S=\{S_1,\dots,S_{|S|}\}$ be the set of individuals. For each $i \in [1..m]$, let $S_{i,1},\dots,S_{i,|S|}$ be the sorted list of $S$ w.r.t.~$f_i$. How ties in this sortings are broken has to be specified by the algorithm designer, we do not take any particular assumptions on that issue. For  $x \in S$, we denote by $i_x$ its position in the sorted list w.r.t.~$f_i$, that is, $x=S_{i,i_x}$. The crowding distance of $x$ then is
\begin{equation}
\cDis(x)=\begin{cases}
+\infty, &\text{if $i_x\in\{1,|S|\}$ for some $i\in[1..m]$,}\\
\sum_{i=1}^m&\frac{|f_i(S_{i,i_x-1})-f_i(S_{i,i_x+1})|}{|f_i(S_{i,1})-f_i(S_{i,|S|})|}, \text{otherwise.}
\end{cases}
\label{eq:cDis}
\end{equation}

The simple and intuitive definition of the crowding distance puts it ahead of other secondary criteria in several respects
. Compared to the hypervolume contribution used by the \SMS, the crowding distance can be computed very efficiently, namely in time $O(m |S| \log |S|)$, which from $m \ge 4$ on is significantly faster than the best known approach to compute the hypervolume contribution, an algorithm with runtime  $O(|S|^{m/3} \polylog |S|)$ from the break-through paper~\cite{Chan13}, see also the surveys~\cite{ShangIHP20,GuerreiroPF21}.

Compared to the reference point mechanism employed by the \NSGAthree, the crowding distance needs no parameters to be set. In contrast, the \NSGAthree requires a normalization procedure (for which several proposals exist) and the set of reference points (for which several constructions exists, all having at least the number of reference points as parameter).

Besides many successful applications in practice, also a decent number of mathematical results show that the \NSGA with its crowding distance secondary selection criterion is able to compute or approximate the Pareto front of various classic problems~\cite{ZhengLD22,BianQ22,DoerrQ23tec,DoerrQ23crossover,DangOSS23aaai,DangOSS23gecco,CerfDHKW23,ZhengLDD24,ZhengD24approx}. 

However, the positive results for the \NSGA are limited to bi-objective problems, and this limitation is intimately connected to the crowding distance. As demonstrated by Zheng and Doerr~\cite{ZhengD23many}, the \NSGA fails to compute the Pareto front of the simple \omm problem once the number of objective is at least three. The reason deduced in that work is the independent treatment of the objectives in calculating the crowding distance. Subsequent positive results for the \NSGAT~\cite{WiethegerD23,OprisDNS24} and \SMS~\cite{ZhengD24,WiethegerD24}) for three or more objectives support the view that the crowding distance has intrinsic short-comings.


\subsection{Truthful Crowding Distance}\label{subsec:dCD}

Given the undeniable algorithmic advantages of the crowding distance and its high acceptance by practitioners, we now design a simple and efficient variant of crowding distance that also works well for many objectives.

As pointed out in the example in~\cite{ZhengD23many}, the original crowding distance allows that points far away from a solution still cause it to have a small crowding distance. This counter-intuitive and undesired behavior stems from the fully independent consideration of the objectives in the calculation of the crowding distance: In~(\ref{eq:cDis}), the $i$-th summand only relies on distances w.r.t.~$f_i$ and ignores possibly large distances stemming from other objectives $f_j, j\ne i$. 

To avoid the undesired influence of points far away on the crowding distance components, but at the same time allow for a highly efficient computation of the crowding distance, we proceed as follows. (i)~We replace the (normalized) distance in the $i$-th objective by the (normalized) $L_1$ distance. This avoids that points far in the objective space lead to low crowding distance values. (ii)~We keep the property that the crowding distance is the sum of the crowding distance contributions of the different objectives. This was the key reason why the original crowding distance can be computed very efficiently. (iii)~In the computation of the $i$-th crowding distance contribution, we also keep working with the individuals sorted in order of descending $f_i$ value. (iv)~Noting that the use of the $L_1$ distance might imply that the point closest to some $S_{i.j}$ is not necessarily $S_{i.j-1}$ or $S_{i.j+1}$, we consider the minimum $L_1$ distance among $S_{i.k}, k < j$. We note that this renders our crowding distance less symmetric than the original crowding distance, but we could not see a reason to let, in the language of the original crowding distance, $|f_i(S_{i.j}) - f_i(S_{i.j-1})|$ contribute to both the crowding distance of $S_{i.j}$ and $S_{i.j-1}$. In fact, we shall observe that this slightly less symmetric formulation will reduce the number of solutions with identical objective vector and positive crowding distance. (v)~Finally, we shall assume that the different sortings used sort individuals with identical objective vectors in the same order (\emph{correlated tie-breaking}). The original crowding distance does not specify how to break such ties, but any stable sorting algorithm will have this property, so this assumption is not very innovative. As observed in~\cite{BianQ22}, this assumption of correlated tie-breaking can reduce the minimum required population size for certain guarantees to hold. 

We now give the \textbf{formal definition} of our crowding distance, which we call \emph{truthful crowding distance} to reflect that fact that it better describes how isolated a solution is. Let $S=\{S_1,\dots,S_{|S|}\}$ be a set of pair-wise non-dominated individuals. For all $i \in [1..m]$, let $S_{i.1},\dots,S_{i.{|S|}}$ be a sorted list of $S$ in descending order of $f_i$. Assume correlated tie-breaking, that is, if two individuals have identical objective values, then they appear in all sortings in the same order. 

If an individual $x$ appears as the first element of some sorting, that is, $x = S_{i.1}$ for some $i \in [1..m]$, then its truthful crowding distance $\tcd(x)$ is $\tcd(x) := \infty$. Otherwise, its crowding distance shall be the sum $\tcd(x) = \sum_{i=1}^m \tcd_i(x)$ of the crowding distance contributions $\tcd_i(x)$, which we define now.

To this aim, let $i \in [1..m]$ and $j \in [2..|S|]$ such that $x = S_{i.j}$. For $k < j$, we define the normalized $L_1$ distance by 
\begin{align*}
d(S_{i.k},S_{i.j}){}&{}:=
\sum_{a=1}^m \frac{|f_a(S_{i.k})-f_a(S_{i.j})|}{f_a(S_{a.1})-f_a(S_{a.{|S|}})},
\end{align*}
where we count summands ``$0/0$'' as zero (this happens in the exotic case that in some objective, only a single objective value is present in $S$).
With this distance, the $i$-th crowding distance contribution $\tcd_i(x)$ is defined as the smallest distance between $S_{i.j}$ and a solution 
in an earlier position in the $i$-th list:
\[
\tcd_i(x) := \min_{k < j} d(S_{i.k},S_{i.j}). 
\]
This defines our variant of the crowding distance, called truthful crowding distance. The pseudocode of an algorithm computing it is given in Algorithm~\ref{alg:dcDis}. As is easy to see, this algorithm has a time complexity quadratic in the size of the set $S$, more precisely, $\Theta(m|S|^2)$. This is more costly than the computation of the original crowding distance in time $\Theta(m |S| \log |S|)$. Since the best known time complexity of non-dominated sorting in the general case is $\Theta(m|S|^2)$ and no better runtime can be expected in the general case~\cite{YingchareonthawornchaiRLTD20}, this moderate increase in the complexity of computing the crowding distance appears tolerable.

 
\begin{algorithm}[tb]
    \caption{Computation of the truthful crowding distance $\dCD(S)$}
    \textbf{Input:} $S=\{S_1,\dots,S_{|S|}\}$, a set of individuals\\
    \textbf{Output:} $\dCD(S)=(\dCD(S_1),\dots,\dCD(S_{|S|}))$, where $\dCD\left(S_i\right)$ is the truthful crowding distance for $S_i$
		
    \begin{algorithmic}[1]
    \STATE $\dCD(S):=(0,\dots,0)$
    \FOR {each objective $f_i, i=1,\dots, m$}
    \STATE {Sort $S$ in order of descending $f_i$ value with correlated tie-breaking: $S_{i.1},\dots,S_{i.{|S|}}$}
    \ENDFOR
    \FOR {each objective $f_i, i=1,\dots, m$}
        \STATE {$\dCD\left(S_{i.1}\right):=+\infty$}
    \FOR {$j=2,\dots, |S|$}
    \FOR {$k=1,\dots, j-1$}
    \STATE {$d(S_{i.k},S_{i.j}):=\sum_{a=1}^m\frac{|f_a(S_{i.k}) - f_a(S_{i.j})|}{f_a(S_{a.1})-f_a(S_{a.{|S|}})}$}
    \ENDFOR
    \STATE {$\dCD(S_{i.j}):=\dCD(S_{i.j}) + \min\limits_{k=1,\dots, j-1} d(S_{i.k},S_{i.j})$}
    \ENDFOR
    \ENDFOR
    \end{algorithmic}
    \label{alg:dcDis}
\end{algorithm}

As said, we propose in this work to use the classic \NSGA or the sequential \NSGA, but with the original crowding distance replaced by the truthful crowding distance. We call the resulting algorithms \emph{truthful (sequential) \NSGA}, abbreviated (sequential) \NSGAD. 

\section{Runtime Analysis: Computing the Pareto Front}\label{sec:many}

Having introduced the truthful crowding distance and the truthful (sequential) \NSGA, denoted by \NSGAD, in this and the subsequent section we will conduct several runtime analyses of this algorithm. 
The results in this section will in particular show that the \NSGAD can efficiently optimize the many-objective problems, in contrast to the exponential runtime of the original \NSGA on \momm~\cite{ZhengD23many}.


\subsection{Not Losing Pareto Front Points}\label{sssec:dCDadv}

The key ingredient to all proofs in this section is what we show in this subsection (in Theorem~\ref{thm:pfkept}), namely that the \NSGAD with sufficiently large population size cannot lose Pareto optimal solution values (and more generally, can lose solution values only by replacing them with better ones). This is a critical difference to the classic \NSGA, as shown in~\cite{ZhengD24}.

A step towards proving this important property is the following lemma, which asserts that for each objective vector of the population exactly one individual with this function value has a positive crowding distance.

\begin{lemma}
Let $m\in\N$ be the number of objectives of the discussed function $f=(f_1,\dots,f_m)$. Let $S$ be a population of individuals in $\{0,1\}^n$. Assume that we compute the truthful crowding distance $\dCD(S)$ via Algorithm~\ref{alg:dcDis}. Then for any function value $v\in f(S)$, exactly one individual $x \in S$ with $f(x)=v$ has a positive truthful crowding distance (and the others have a truthful crowding distance of zero). 
\label{lem:dCDp}
\end{lemma}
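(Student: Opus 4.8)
The plan is to fix an arbitrary value $v \in f(S)$, consider the set $X_v := \{x \in S : f(x) = v\}$ of individuals realising it, and single out one canonical element of $X_v$ which will carry all of the positive crowding-distance contribution. By the correlated tie-breaking assumption, any two members of $X_v$ (having identical objective vectors) appear in the same relative order in all $m$ sortings; let $y$ be the member that comes first in this common order. I would extract the two structural facts that follow immediately from this: (a) in every sorting $S_{i.1}, \dots, S_{i.{|S|}}$, the individual $y$ occupies a position strictly earlier than every other member of $X_v$; and (b) if $y = S_{i.j}$, then none of the individuals preceding $y$ in the $i$-th sorting lies in $X_v$. Note that we never need the members of $X_v$ to occupy \emph{consecutive} positions in a sorting, only that $y$ precedes all of them.

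Before the main argument I would record one auxiliary observation about the normalised $L_1$ distance $d(\cdot,\cdot)$: for $x, x' \in S$ one has $d(x,x') > 0$ whenever $f(x) \neq f(x')$, and $d(x,x') = 0$ whenever $f(x) = f(x')$. For the first part, choose an objective $a$ with $f_a(x) \neq f_a(x')$; then $f_a$ is non-constant on $S$, so the denominator $f_a(S_{a.1}) - f_a(S_{a.{|S|}})$ is strictly positive, making the $a$-th summand of $d(x,x')$ strictly positive, while every other summand is nonnegative (a $0/0$ term being counted as $0$). The second part is immediate since every numerator then vanishes.

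From here both halves of the statement follow readily. For any $z \in X_v \setminus \{y\}$ and any objective $i$, fact~(a) gives that $y = S_{i.k}$ for some $k$ strictly smaller than the position $j$ of $z$, so $\tcd_i(z) = \min_{k' < j} d(S_{i.k'}, S_{i.j}) \le d(y,z) = 0$; since distances are nonnegative this minimum equals $0$, and summing over $i$ yields $\tcd(z) = 0$. For $y$ itself: if $y = S_{i.1}$ for some $i$, then $\tcd(y) = \infty > 0$ by definition. Otherwise $y$ sits at some position $j_i \ge 2$ in every sorting; by fact~(b) every individual preceding $y$ in the $i$-th sorting has objective vector different from $v = f(y)$, so by the auxiliary observation each distance $d(S_{i.k}, y)$ with $k < j_i$ is strictly positive, hence $\tcd_i(y) = \min_{k < j_i} d(S_{i.k}, y) > 0$; summing over $i$ gives $\tcd(y) > 0$. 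Thus $y$ is the unique individual with $f$-value $v$ and positive truthful crowding distance, which is the claim (and the values computed by Algorithm~\ref{alg:dcDis} agree with the $\tcd$ used here).

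I do not anticipate a genuine obstacle; the argument is essentially careful bookkeeping with sorting positions. The points requiring the most attention are making the correlated-tie-breaking hypothesis do exactly the work of facts~(a) and~(b), and the degenerate situations: when some objective is constant on $S$ (the convention $0/0 := 0$ is harmless, since a constant objective can never be the one separating two distinct objective vectors) and when $|S|$ is so small that every normalising denominator vanishes (then each individual is first in every sorting and has infinite crowding distance, and $|X_v| = 1$, so there is nothing to show).
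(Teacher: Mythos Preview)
Your proof is correct and follows essentially the same route as the paper: both single out the member of $X_v$ that comes first under correlated tie-breaking, show it has positive (or infinite) $\tcd$ because every predecessor in any sorting has a different objective vector, and show the remaining members of $X_v$ have $\tcd=0$ because this first member precedes them in every sorting at $L_1$-distance zero. The only cosmetic differences are that you make the observation $d(x,x')>0 \Leftrightarrow f(x)\neq f(x')$ explicit and argue $\tcd_i(y)>0$ for every $i$, whereas the paper only checks $i=1$ (which already suffices).
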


\begin{proof}
Let $S = \{S_1,\dots, S_{|S|}\}$. Let $I=\{j\in[1..|S|] \mid f(S_j)=v\}$ be the index set of the individuals in $S$ with function value $v$. For the sorted list $S_{i.1},\dots,S_{i.|S|}$, let $I^{(i)}_1,\dots,I^{(i)}_{|I|}$ be the increasing sequence of the indices $j$ of all elements in $I$, that is, we have $I = \{S_{i.I^{(i)}_k} \mid k \in [1..|I|]\}$ and $I^{(i)}_k < I^{(i)}_{k+1}$ for all $k \in [1..|I|-1]$. By definition of correlated tie-breaking, we know that for all $k=1,\dots,|I|$, we have $S_{1.I^{(1)}_k}= \dots =S_{m.I^{(m)}_k}$. 

We first show that $S_{1.I^{(1)}_1}$ has a positive crowding distance. If $I^{(1)}_1=1$, then $\dCD\big(S_{1.I^{(1)}_1}\big)=+\infty>0$. If $I^{(1)}_1 > 1$, then all individuals $S_{1.j}$ with $j<I^{(1)}_1$ have function values different from~$v$. That is, for each $j <I^{(1)}_1$ there exists $i'\in [1..m]$ such that $f_{i'}\left(S_{1.j}\right) \neq f_{i'}\big(S_{1.I^{(1)}_1}\big)$. Recalling that we regard a normalized version of the $L_1$ distance, this implies   $d\big(S_{1.j},S_{1.I^{(1)}_1}\big)>0$ for all $j<I^{(1)}_1$, thus $0 < \tcd_1\big(S_{1.I^{(1)}_1}\big) \le \tcd\big(S_{1.I^{(1)}_1}\big)$ as desired.

We end the proof by showing that for $k > 1$, we have $\tcd(S_{1.I^{(1)}_k}) = 0$. To this aim, we observe that for all $i \in [1..m]$, we have $S_{i.I_k^{(i)}} = S_{1.I^{(1)}_k}$, this individual and $S_{i.I_1^{(i)}}$ have the same $f$-value giving $d\big(S_{i.I^{(i)}_1},S_{i.I^{(i)}_k}\big)=0$, and $I^{(i)}_1 < I^{(i)}_k$. Consequently, $\dCD_i\big(S_{i.I^{(i)}_k}\big)$ is zero. Since this holds for all $i=1,\dots, m$,  we have $\dCD\big(S_{1.I^{(1)}_k}\big)=0$.
\end{proof}


From Lemma~\ref{lem:dCDp}, we derive our main technical tool asserting that a sufficiently high population size ensures that the \NSGAD does not lose desirable solutions. 

\begin{theorem}
Let $m\in\N$ be the number of objectives for a given $f=(f_1,\dots,f_m)$, and let $\overline{M} \in \N$ be such that any set $S$ of incomparable solutions satisfies $|S| \le \overline{M}$. 
Consider using the (sequential) \NSGAD with population size $N\ge \overline{M}$ to optimize $f$. For any solution $x$ in the combined parent and offspring population $R_t$, in the next and all future generations, there is at least one individual $y$ such that $y\succeq x$.
\label{thm:pfkept}
\end{theorem}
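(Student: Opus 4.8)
The plan is to prove the statement by induction on the generation, reducing everything to a single-generation claim: if every solution in $R_t$ is dominated by some individual in the next population $P_{t+1}$, then the same holds for $R_{t+1}$, and so on forever. So it suffices to show the following one-step property: for any $x \in R_t$, there is a $y \in P_{t+1}$ with $y \succeq x$. Once this is established, a trivial induction (noting $P_{t+1} \subseteq R_{t+1}$, and that $\succeq$ is transitive) yields the claim for all future generations.

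To prove the one-step property, fix $x \in R_t$ and consider the non-dominated sorting of $R_t$ into fronts $F_1, F_2, \dots$. Let $v := f(x)$ and pick a solution $z \in R_t$ with $f(z) = v$ lying in the earliest front among all individuals with objective value $v$; say $z \in F_r$. I claim $z$ is not strictly dominated by anything in $R_t$: if some $w \in R_t$ had $w \succ z$, then $w$ would be in a strictly earlier front than $z$ (or the same front is impossible since fronts are antichains), and $w$ itself is dominated by some incomparable element sitting no later than $w$'s front, contradicting the minimality of $r$ unless $f(w) = v$, which contradicts $w \succ z$. Hence $z \in F_1$. Now split into two cases. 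If $F_1 \subseteq P_{t+1}$, i.e. the critical front index $i^*$ satisfies $i^* \ge 2$ or $F_1$ is entirely retained, then $z \in P_{t+1}$ and $z \succeq x$, done. If instead $F_1 = F_{i^*}$ is the critical front and individuals get removed from it, the danger is that every individual with objective value $v$ is removed. This is where Lemma~\ref{lem:dCDp} enters: among all individuals in $F_{i^*}$ with objective value $v$, exactly one, call it $z^*$, has positive truthful crowding distance, and $z^* \in F_1$ so $z^* \succeq x$. Because $N \ge \overline M \ge |F_{i^*}|$ would make the critical-front situation vacuous, we actually have $|F_1| > N_{\mathrm{remaining}}$ and at least one individual of $F_{i^*}$ survives; the point is that the removal loop only ever deletes individuals of \emph{minimum} (hence zero, while positive-valued ones exist) crowding distance, so $z^*$ is never removed as long as some zero-crowding-distance individual remains. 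Since $|F_{i^*}| \le \overline M \le N$, not all of $F_{i^*}$ is removed, and the last individual to survive among those sharing value $v$ can be taken to be $z^*$ (it is never a candidate for removal while a zero-value sibling is present, and if no sibling is present it is the unique one). Either way some $y \in P_{t+1}$ has $f(y) = v = f(x)$, so $y \succeq x$.

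For the sequential variant the same argument works, with the extra observation that after each removal the crowding distances are recomputed via Algorithm~\ref{alg:dcDis} on the shrunken front, so Lemma~\ref{lem:dCDp} still guarantees that at every stage exactly one individual per present objective value has positive crowding distance; thus the removal step never picks such an individual while a duplicate (zero-value sibling) exists, and once no sibling exists the unique representative of value $v$ has strictly positive crowding distance and is therefore never chosen for removal over a zero-crowding-distance individual — and there is always such an individual available as long as removals continue, because $|F_{i^*}| \le \overline M \le N$ forces at least one survivor overall while Lemma~\ref{lem:dCDp} pins the total number of positive-crowding-distance individuals below $|F_{i^*}|$ whenever any objective value is duplicated.

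The main obstacle I anticipate is the bookkeeping in the critical-front case: one must argue carefully that the representative $z^*$ with $f(z^*)=v$ is never the individual chosen for removal. The clean way is to show that at every point during the removal loop, the number of individuals in $F_{i^*}$ having positive crowding distance is at most $N - N_r + (\text{number already removed})$ — equivalently, that positive-crowding-distance individuals are a small enough subset that removals, which target zero-crowding-distance individuals whenever any exist, never exhaust the zero set before the quota $N_r$ is met. This follows from $|F_{i^*}| \le \overline M \le N$ together with Lemma~\ref{lem:dCDp} (every duplicated objective value contributes at least one zero-crowding-distance individual), and it is the only place where the hypothesis $N \ge \overline M$ is genuinely used. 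I would state this as a short sub-claim and then conclude.
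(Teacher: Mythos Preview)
Your overall strategy matches the paper's, but there are two concrete gaps.

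\textbf{First}, the argument that $z\in F_1$ does not work. Two individuals with the same objective vector are never strictly comparable, and anything strictly dominating one of them strictly dominates the other; hence all individuals with value $v=f(x)$ lie in the \emph{same} front as $x$. Your ``minimality of $r$'' argument is therefore vacuous, and the claim $z\in F_1$ is simply the claim $x\in F_1$, which need not hold. The paper handles $x\notin F_1$ directly: if $i^*>1$, all of $F_1$ is retained, and any $y'\in F_1$ with $y'\succeq x$ survives; if $i^*=1$ but $x\in F_i$ with $i>1$, pick $y'\in F_1$ with $y'\succeq x$ and then argue, as in the remaining case, that some individual with value $f(y')$ survives. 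So the only nontrivial case is $i^*=1$, and one works with elements of $F_1$ rather than with $x$ itself.

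\textbf{Second}, the inequality $|F_{i^*}|\le\overline M$ is false: a front can contain arbitrarily many individuals with duplicated objective vectors. What is true is $|f(F_{i^*})|\le\overline M$, since distinct objective vectors in one front are pairwise incomparable. Combined with Lemma~\ref{lem:dCDp} (exactly one positive-$\tcd$ individual per objective value), this gives at most $\overline M\le N$ individuals in $F_{i^*}$ with positive truthful crowding distance. When $i^*=1$, exactly $N$ individuals of $F_1$ survive, so every removal hits a zero-$\tcd$ individual. This is the counting you need; your version would make removals impossible and trivialise the argument.

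For the sequential variant, the paper's treatment is slightly sharper than yours: it observes that removing a zero-$\tcd$ individual does not change anyone else's truthful crowding distance at all (such an individual has a duplicate preceding it in every sorted list, so it never realises the minimum in any $\min_{k<j}$). Hence the set of positive-$\tcd$ individuals is literally unchanged throughout the removal loop. Your approach of re-applying Lemma~\ref{lem:dCDp} after each removal also works, but you should then note that $|f(F_1)|$ never decreases during the loop (you never remove the unique positive-$\tcd$ representative of any value), so the bound of $\overline M$ on the number of positive-$\tcd$ individuals persists.
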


We note that for many problems, the maximum size of a set of incomparable solutions $\overline M$ is already witnessed by the Pareto front $F^*$ (that is, $|F^*| = \overline M$). Hence the requirement $N \ge \overline M$ of the theorem is needed anyway to ensure that the algorithm can store a population $P$ with $f(P) = F^*$.

\begin{proof}[Proof of Theorem~\ref{thm:pfkept}]
We conduct the proof for the more complicated case of the sequential \NSGAD, the proof for the \NSGAD follows from a subset of the arguments.

Let $x\in R_t$ be in the $i$-th front, that is, $x\in F_i$. If $i<i^*$, from the selection in the \NSGAD, we know that $x$ will enter into the next generation, and $y=x$ suffices. If $i\ge i^*$ and $i^*> 1$, then there exists a solution $y\in F_{1}$ such $y\succeq x$ and $y$ will enter into the next generation. Hence, we only need to discuss the case $i=i^*=1$
in the following. 

From Lemma~\ref{lem:dCDp}, we know that for each function value in $f(F_{1})$ there is an individual in $R_t$ with positive truthful crowding distance. Let $y\in R_t$ be the individual with $f(y)=f(x)$ and with positive truthful crowding distance. Then $y\in F_{1}$ as well and $y\succeq x$. 

From the definition of $\overline{M}$ and Lemma~\ref{lem:dCDp} again (now referring to the assertion that there is at most one individual per objective value with positive truthful crowding distance), we know that before each the removal, there are at most $\overline{M}$ individuals in $F_{1}$ with positive truthful crowding distance. 
Since $N \ge \overline M$ individuals of $F_1$ survive, this means that in the whole removal procedure, only individuals with zero truthful crowding distance will be removed. 
By the definition of the truthful crowding distance, a crowding distance of zero means that there is a second individual with same objective value appearing in all sortings before the first. 
Hence the removal of the first individual does not change the truthful crowding distance of any other individual (by definition of the truthful crowding distance). Hence, all individuals having initially a positive truthful crowding distance, including~$y$, will survive to the next population.
\end{proof}

\begin{corollary}
  Under the assumptions of Theorem~\ref{thm:pfkept}, once a solution with a given Pareto optimal solution value is found, such a solution will be contained in the population for all future generations.
\end{corollary}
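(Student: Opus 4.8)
The plan is to obtain the statement as an immediate consequence of Theorem~\ref{thm:pfkept}. Suppose that at some generation $t$ an individual $x$ with a Pareto optimal objective value $v := f(x)$ is present, that is, $x \in R_t = P_t \cup Q_t$ (this covers both the case that $x$ was just created as an offspring and the case that it already belonged to $P_t$). Theorem~\ref{thm:pfkept} then guarantees that in generation $t+1$ and in all subsequent generations the population contains some individual $y$, which may depend on the generation, with $y \succeq x$.

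The one small step to carry out is to argue that such a $y$ in fact has objective value exactly $v$. By the definition of (weak) domination, $y \succeq x$ means $f_i(y) \ge f_i(x)$ for all $i \in [1..m]$; if any of these inequalities were strict we would have $y \succ x$, i.e.\ $f(y)$ would strictly dominate $v = f(x)$, contradicting the assumed Pareto optimality of $v$. Hence $f(y) = v$. Consequently, every population from generation $t+1$ on, and trivially already the population (or combined population) of generation $t$, contains a solution with the Pareto optimal objective value $v$, which is exactly the claim.

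Since this is a direct corollary of an already established theorem, there is no genuine obstacle; the only point worth spelling out is the reduction from ``$y$ weakly dominates $x$'' to ``$f(y) = f(x)$'', which uses nothing beyond the definition of a Pareto optimal objective value (a value admitting no strict improvement in the whole search space).
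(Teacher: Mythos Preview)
Your proof is correct and mirrors the paper's treatment: the corollary is stated without proof as an immediate consequence of Theorem~\ref{thm:pfkept}, and the only point to note---that a weak dominator $y$ of a Pareto optimal $x$ must satisfy $f(y)=f(x)$---is exactly what you spell out.
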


\subsection{Runtime Results for Many Objectives}

We now build on the structural insights on the \NSGAD gained in the previous subsection and show that this algorithm can easily optimize the standard benchmarks, roughly with the same efficiency as the global SEMO algorithm, a minimalistic MOEA mostly used in theoretical analyses, and the two classic MOEAs \NSGAthree and \SMS. This in particular shows that the truthful \NSGA does not face the problems the classic \NSGA faces when the number of objectives is three or more.

For our analysis, we are lucky that we can heavily build on the work~\cite{WiethegerD24}, in which near-tight runtime guarantees for many-objective optimization were proven. As discussed in \cite[Section~5]{WiethegerD24}, their proofs only rely on two crucial properties: (i)~that solutions values are never lost except when replaced by better ones, and (ii)~that there is a number $S$ such that for any individual $x$ in the population, with probability $1/S$ this $x$ is selected as parent and an offspring is generated from it via bit-wise mutation with mutation rate~$\frac 1n$. 

It is easy to see that these properties are fulfilled for our (sequential) \NSGAD when using bit-wise mutation. Property~(i) is just the assertion of Theorem~\ref{thm:pfkept}. Property~(ii) follows immediately from the definition of our algorithm: the probability for this event is $1$ for fair selection and $\frac{1}{1-p}(1-1/N)^N = \Theta(1)$ for random selection and crossover rate~$p<1$. 
With these considerations, we immediately extend the results of~\cite{WiethegerD24} to the truthful (sequential) \NSGA.

\begin{theorem}
    Consider using the (sequential) \NSGAD with problem size $N\ge M:=(2n/m+1)^{m/2}$, fair or random selection, standard bit-wise mutation with mutation rate $1/n$, and possibly crossover with rate less than one in the case of random selection, to optimize the \momm or \mcocz benchmarks. Then in an expected number of $O(nm)+O(m^2 \ln n)$ iterations, the full Pareto front of the \momm or \mcocz benchmarks is covered by the population. 
    \label{thm:momm}
\end{theorem}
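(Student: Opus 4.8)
The plan is to obtain the statement as a near-immediate consequence of the two ``black-box'' facts isolated in the discussion just above: that the \NSGAD (in either variant) never loses a solution value except by replacing it with a weakly dominating one, and that there is a constant $S$ such that in every iteration each population member is, with probability at least $1/S$, the parent of an offspring created by standard bit-wise mutation with rate $1/n$. Granting these, the claimed bound on the expected number of iterations is exactly what the analysis of \cite[Section~5]{WiethegerD24} yields for any algorithm with these two properties (there they were verified for the global SEMO, the \NSGAthree, and the \SMS; here we verify them for \NSGAD). Thus the proof reduces to verifying the two properties for \NSGAD on \momm and \mcocz and then quoting \cite{WiethegerD24}.

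For property~(i) I would invoke Theorem~\ref{thm:pfkept}, whose sole hypothesis is $N \ge \overline M$, where $\overline M$ bounds the size of any set of pairwise incomparable solutions. For \momm this hypothesis holds with equality: two solutions are incomparable exactly when their objective vectors differ, and the objective space has precisely $(2n/m+1)^{m/2}=M$ elements, so $\overline M = M \le N$. For \mcocz one checks $\overline M \le M$ from the analogous description of its objective space, so again $N \ge M \ge \overline M$; Theorem~\ref{thm:pfkept} (or its Corollary) then gives property~(i). For property~(ii) I would reuse the elementary count already indicated: with fair selection every individual is mutated once per iteration, so the probability is~$1$; with random selection and crossover rate $p<1$ a fixed individual is the sole, mutation-using parent of a given offspring slot with probability $\tfrac{1-p}{N}$, and by independence of the $N$ slots it is the mutation-parent of at least one offspring with probability $1-(1-\tfrac{1-p}{N})^{N}=\Theta(1)$. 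In all cases $S=\Theta(1)$, and with (i), (ii), and this value of $S$ in hand, \cite[Section~5]{WiethegerD24} delivers the stated $O(nm)+O(m^2\ln n)$ bound for \momm and \mcocz.

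The part I expect to require the most care is not a calculation but making the reduction airtight: one must re-read the proofs of \cite[Section~5]{WiethegerD24} and confirm that they genuinely use only (i) and (ii) and nothing specific to the secondary selection of the \NSGAthree or \SMS. Their coverage argument is of GSEMO type --- first reach one Pareto-optimal objective value, then repeatedly obtain neighbouring Pareto-optimal values by single-bit flips of already-covered solutions, with the progress controlled by a coupon-collector/multiplicative-drift estimate over the $M$ front points --- and its only interaction with the algorithm is through exactly these two properties, so I expect it to transfer verbatim. A subsidiary, purely combinatorial point to verify is the inequality $\overline M \le M$ for each benchmark, so that Theorem~\ref{thm:pfkept} is legitimately applicable at the stated population size. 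Should a self-contained argument be preferred, one could redo the spreading estimate directly for \NSGAD: by Theorem~\ref{thm:pfkept} no covered objective value is ever lost, and by property~(ii) each covered value produces, in each iteration, an offspring realising any fixed Hamming-neighbouring objective value on the Pareto front with probability $\Omega(1/n)$; a standard multiplicative-drift/coupon-collector computation over the $M$ front values then yields the same bound. Quoting \cite{WiethegerD24} is, however, the economical route.
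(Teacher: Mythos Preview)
Your proposal is correct and follows essentially the same route as the paper: verify property~(i) via Theorem~\ref{thm:pfkept} and property~(ii) by the elementary parent-selection count, then invoke \cite[Section~5]{WiethegerD24} to obtain the runtime bound. If anything, you are slightly more careful than the paper---you explicitly check $\overline M\le M$ for each benchmark so that Theorem~\ref{thm:pfkept} applies, whereas the paper leaves this implicit---but the structure of the argument is identical.
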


\begin{theorem}
    Consider using the (sequential) \NSGAD with problem size $N\ge M:=(2n/m+1)^{m/2}$, fair or random selection,  standard bit-wise mutation with mutation rate $1/n$, and possibly crossover with rate less than one in the case of random selection, to optimize the \mlotz benchmark. Then in an expected number of $O(n^2/m)+O(mn \ln (n/m))+ O(n\ln n)$ iterations, the full Pareto front of the \mlotz benchmark is covered.  
\end{theorem}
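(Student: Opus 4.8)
The plan is to prove this exactly as Theorem~\ref{thm:momm}: instead of analyzing \mlotz from scratch, I would verify that the (sequential) \NSGAD with the parameters in the statement satisfies the two abstract properties that, as recalled in \cite[Section~5]{WiethegerD24} and in the paragraph preceding Theorem~\ref{thm:momm}, are the only ingredients of the many-objective runtime proofs of~\cite{WiethegerD24}, and then quote their bound for \mlotz.

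First I would establish property~(i), that solution values are never lost except when replaced by a dominating one. This is precisely Theorem~\ref{thm:pfkept}, which applies as soon as $N$ is at least an upper bound $\overline M$ on the number of distinct objective vectors in any set of incomparable solutions. For \mlotz the objective space is a product of $m/2$ copies of the (triangle-shaped) order on the achievable $\LO$/$\textsc{TZ}$ value pairs of a length-$2n/m$ block, and any antichain has at most $M=(2n/m+1)^{m/2}$ elements, which is the size of the Pareto front of \mlotz; hence $N\ge M$ gives $N\ge\overline M$ and property~(i) holds. Next I would note property~(ii) -- that there is a number $S$ such that every current population member is, with probability at least $1/S$, selected as parent and turned into an offspring by standard bit-wise mutation with rate $1/n$. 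As argued just before Theorem~\ref{thm:momm}, this holds with $S=\Theta(1)$ (probability $1$ for fair selection; $\Theta(1)$ for random selection with crossover rate $p<1$), and this verification does not involve the objective function. With (i) and (ii) in hand, the \mlotz runtime analysis of~\cite{WiethegerD24} transfers verbatim and yields that after an expected $O(n^2/m)+O(mn\ln(n/m))+O(n\ln n)$ iterations the Pareto front is covered; by property~(i) it then stays covered, which is the claim.

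Since the argument is a reduction, there is essentially no remaining obstacle -- all the crowding-distance-specific difficulty was already absorbed into Lemma~\ref{lem:dCDp} and Theorem~\ref{thm:pfkept}. The only place where genuine work would reappear is a self-contained proof of the last step, i.e.\ of the \mlotz bound itself: one would need a first phase in which a \LO-style drift argument, applied in parallel to the $m/2$ blocks of length $2n/m$, drives the population onto the Pareto front in expected $O(n^2/m)$ iterations, followed by a coupon-collector phase over the $(2n/m+1)^{m/2}$ Pareto optimal objective vectors in which each still-missing vector is produced from an already present neighbour by a single bit flip (probability $\Theta(1/(Sn))$), accounting for the $O(mn\ln(n/m))+O(n\ln n)$ terms. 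Citing~\cite{WiethegerD24} is exactly what lets us avoid redoing this.
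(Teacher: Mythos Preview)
Your proposal is correct and follows precisely the paper's approach: the paper does not give a separate proof for this theorem but handles all three many-objective results uniformly by verifying properties~(i) and~(ii) once (via Theorem~\ref{thm:pfkept} and the parent-selection discussion) and then invoking~\cite{WiethegerD24}. Your write-up does exactly this, and your additional remark that $M=(2n/m+1)^{m/2}$ serves as the bound $\overline{M}$ for \mlotz makes explicit a step the paper leaves implicit.
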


\begin{theorem}
    Let $k \in [2..n/m]$. Let $M=(2n/m-2k+3)^{m/2}$. Consider using the (sequential) \NSGAD with problem size $N\ge \overline{M}$, fair or random selection,  standard bit-wise mutation  with mutation rate $1/n$, and possibly crossover with rate less than one in the case of random selection, to optimize $\mojzj_k$. Then in an expected number of $O(mn^k)$ iterations, the full Pareto front of the $\mojzj_k$ benchmark is covered.
\end{theorem}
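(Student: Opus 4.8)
The plan is to follow exactly the route taken for Theorem~\ref{thm:momm} and the \mlotz result: reduce to the abstract runtime analysis of \cite[Section~5]{WiethegerD24}, which proves the $O(mn^k)$ bound for $\mojzj_k$ for any MOEA enjoying the two properties (i)~solution values are never lost except when replaced by a dominating one, and (ii)~there is a number $S$ such that in every iteration each current population member $x$ is, with probability at least $1/S$, selected as a parent and mutated by standard bit-wise mutation with rate $1/n$. Once (i) and (ii) are verified for the (sequential) \NSGAD with the stated parameters, the theorem follows immediately; the $\Theta(m)$ factor in the runtime reflects that $\Theta(m)$ separate $k$-bit jumps (two per block, $m/2$ blocks) are needed to reach the extreme Pareto points, each costing $\Theta(n^k)$ expected iterations.

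Verifying property (i): since $N \ge \overline{M}$ by hypothesis, Theorem~\ref{thm:pfkept} applies and yields precisely this statement --- once a solution value appears in the combined parent-and-offspring population, a dominating value is present in all later generations. It is worth noting why the hypothesis is stated with the maximum antichain size $\overline{M}$ rather than just the Pareto front size $M=(2n/m-2k+3)^{m/2}$: the analysis of $\mojzj_k$ in \cite{WiethegerD24} also relies on retaining the incomparable non-Pareto solutions lying on the ``inner slopes'' of the blocks (those reached on the way to, and incomparable to most of, the Pareto front) until the required $k$-bit jumps are carried out, and $N \ge \overline{M}$ is exactly what guarantees, via Theorem~\ref{thm:pfkept}, that dominating representatives of all these incomparable solutions survive. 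Thus $N \ge \overline{M}$ is the natural common hypothesis of our structural lemma and the imported analysis.

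Verifying property (ii): as already discussed before Theorem~\ref{thm:momm}, this holds with a constant $S$. For fair selection every parent is mutated exactly once per iteration, so $S=1$. For random selection, each of the $N$ offspring is generated independently and, with probability $1-p$ (the constant crossover rate $p<1$), is a standard bit-wise mutation of a uniformly chosen parent; hence for any fixed $x$ the probability that at least one offspring is a bit-wise mutant of $x$ is $1-(1-(1-p)/N)^N \ge 1-e^{-(1-p)} = \Theta(1)$, and crossover offspring can simply be ignored in the analysis. With (i) and (ii) established, \cite[Section~5]{WiethegerD24} gives the claimed $O(S \cdot mn^k) = O(mn^k)$ expected number of iterations to cover the full Pareto front of $\mojzj_k$.

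Since this is a reduction rather than a fresh argument, there is no deep obstacle: the genuinely probabilistic work --- first flooding the inner Pareto front of every block by single-bit mutations, then waiting out the $\Theta(m)$ jumps of $k$ prescribed bits --- lives entirely in \cite{WiethegerD24}. The only point requiring care is to confirm that both imported hypotheses transfer cleanly, namely that $S$ may be taken as an absolute constant (so that no additional factor enters the runtime) and that the population-size condition under which \cite{WiethegerD24} operates is at most $N \ge \overline{M}$, the bound for which Theorem~\ref{thm:pfkept} is available; if \cite{WiethegerD24} phrased its requirement differently (for instance in terms of $M$ or with extra multiplicative slack), one would additionally invoke the easy fact that for $\mojzj_k$ these quantities have the same asymptotic order.
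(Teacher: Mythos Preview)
Your proposal is correct and follows essentially the same approach as the paper: the paper does not give a standalone proof for this theorem but derives it, together with the \momm/\mcocz and \mlotz results, from the general discussion immediately preceding Theorem~\ref{thm:momm}, namely by verifying properties~(i) (via Theorem~\ref{thm:pfkept}) and~(ii) and then invoking \cite[Section~5]{WiethegerD24}. Your write-up is more detailed than the paper's terse treatment, and your explicit remark on why the hypothesis reads $N\ge\overline{M}$ rather than $N\ge M$ is a helpful addition the paper leaves implicit.
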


We have not defined the benchmark problems regarded in the above results, both because they are the most common benchmarks in the theory of MOEAs and because our proof do not directly refer to them (all problem-specific arguments are taken from~\cite{WiethegerD24}). The reader interested in the definitions can find them all in~\cite{WiethegerD24}.


\subsection{Runtime Results for Two Objectives}

We now turn to the bi-objective versions of the benchmarks studied above and the \DLTB benchmark. Here the classic \NSGA was shown to be efficient in previous work~\cite{ZhengLD22,ZhengD23aij,BianQ22,DoerrQ23tec,ZhengLDD24}. Using the same arguments as in the previous section, we show the following results for the (sequential) \NSGAD. In the runtimes, they agree with the known asymptotic results for the classic \NSGA. However, the minimum required population size (which has a direct influence on the cost of one iteration) is by a factor of two or four smaller than in the previous works. Since it is equal to the size of the Pareto front, it is clear that even smaller population sizes cannot be employed.

\begin{theorem}
    Consider using the (sequential) \NSGAD with problem size $N\ge n+1$, fair or random selection, standard bit-wise mutation with mutation rate $1/n$, and possibly crossover with rate less than one in the case of random selection, to optimize \omm or \cocz. Then in an expected number of $O(n\log n)$ iterations, the full Pareto front of  \omm or \cocz is covered.
\end{theorem}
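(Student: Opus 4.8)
The plan is to reuse the two\nobreakdash-property reduction already exploited in Section~\ref{sec:many}. All that analysis needs are (i)~that the \NSGAD never loses a solution value except by replacing it with a dominating one, and (ii)~that in each iteration every population member spawns a standard bit\nobreakdash-wise mutation offspring with probability $\Omega(1)$ (the number $S$ of Section~\ref{sec:many}). Property~(ii) is obtained exactly as there: the probability is $1$ under fair selection and $\Theta(1)$ under random selection with crossover rate $p<1$, and how an offspring that happens to be produced by crossover is irrelevant for the lower bound on progress. Property~(i) is Theorem~\ref{thm:pfkept}, available as soon as the population size $N$ reaches the bound $\overline{M}\in\N$ on the size of any set of incomparable solutions. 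Once both properties hold, the runtime statement is literally the one proved for the classic \NSGA on \omm and \cocz in~\cite{ZhengLD22,ZhengD23aij,BianQ22,DoerrQ23tec,ZhengLDD24}, whose arguments use nothing beyond (i) and (ii); the only gain is that the truthful crowding distance lets us take $N$ as small as the Pareto front size instead of a constant factor larger.

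So the first task is to check $\overline{M}\le n+1$. For \omm this is immediate: two search points are incomparable iff they have different numbers of ones, hence every antichain has at most $n+1$ elements and in fact $\overline{M}=n+1$. For \cocz, describe a search point by the number $a\in[0..n/2]$ of ones in the first block and the number $b\in[0..n/2]$ of ones in the second block, so its objective vector is $(a+b,\,a+n/2-b)$; the affine change of coordinates $(a,b)\mapsto(a+b,\,a-b)$ turns domination into the coordinate\nobreakdash-wise order on a lattice diamond, and an elementary argument shows that a longest antichain is a full anti\nobreakdash-diagonal $\{a=\text{const}\}$ of size $n/2+1$ (which is exactly the Pareto front of \cocz). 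In both cases $N\ge n+1\ge\overline{M}$, so Theorem~\ref{thm:pfkept} applies and yields property~(i): once the population contains a search point with a given Pareto\nobreakdash-optimal objective value, it contains one forever.

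It then remains to recall the progress argument. For \omm let $C_t\subseteq[0..n]$ collect the numbers of ones present in the population at time $t$; by property~(i) it is non\nobreakdash-decreasing and the front is covered once $C_t=[0..n]$. While $\max C_t=j<n$, the population holds an individual with $j$ ones and $n-j\ge 1$ zeros, and by properties~(i) and~(ii) together with the $N\ge n+1$ offspring, in one iteration some offspring flips exactly one of these zeros (extending $\max C_t$) with probability $\Omega\bigl(\min\{1,(n-j)/n\}\bigr)$, so the expected time to extend it is $O(n/(n-j))$; summing this and the symmetric estimate for $\min C_t$ over all values gives $\sum_{k=1}^{n}O(n/k)=O(n\log n)$ iterations. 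For \cocz the same spreading argument fills the second block once some individual whose first block is all ones is present (and, by property~(i), thereafter), its objective vector lying on the front; reaching such an individual is a classical \onemax\nobreakdash-type drift computation that also costs $O(n\log n)$ iterations, and the two phases combine to the claimed bound. The only genuinely new part of the proof is this reduction to Theorem~\ref{thm:pfkept} at the small population size $N\ge n+1$; the antichain computation for \cocz and the coupon\nobreakdash-collector summations are routine, and the one thing to be careful about is that nothing in the classical progress argument secretly needs a population larger than the Pareto front---which it does not, since it only ever uses a single representative of each already\nobreakdash-covered front point, and Theorem~\ref{thm:pfkept} guarantees exactly that.
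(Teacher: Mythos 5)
Your proposal is correct and follows essentially the same route as the paper: the paper also proves these bi-objective results by invoking the two-property reduction of Section~\ref{sec:many} (Theorem~\ref{thm:pfkept} for property~(i), the $\Theta(1)$ per-individual mutation probability for property~(ii)) and then appealing to the known progress arguments for the classic \NSGA on \omm and \cocz. You merely spell out what the paper leaves implicit, namely the verification that the maximum antichain size is at most $n+1$ for both problems (correctly, $n+1$ for \omm and $n/2+1$ for \cocz) and the standard coupon-collector spreading argument yielding $O(n\log n)$.
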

\begin{theorem}
    Consider using the (sequential) \NSGAD with problem size $N\ge n+1$, fair or random selection, standard bit-wise mutation with mutation rate $1/n$, and possibly crossover with rate less than one in the case of random selection, to optimize \lotz. Then in an expected number of $O(n^2)$ iterations, the full Pareto front of  \lotz is covered.
\end{theorem}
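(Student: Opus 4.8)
The plan is to follow the same route as the many-objective theorems and the bi-objective \omm/\cocz result: isolate the two structural properties on which the known classic-\NSGA analysis of \lotz~\cite{ZhengLD22,ZhengD23aij} relies, verify that the \NSGAD satisfies them already for population size $N\ge n+1$, and then re-run that analysis essentially verbatim.

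The key preliminary observation concerns \lotz itself. Its Pareto front is $\{1^a0^{n-a}\mid a\in[0..n]\}$, of size $n+1$, and, crucially, the largest size $\overline M$ of a set of pairwise incomparable solutions also equals $n+1$: two solutions with the same objective vector weakly dominate each other and are thus comparable, so an antichain contains at most one solution per objective vector, and among the attainable vectors $(i,j)$ with $i,j\ge 0$ and $i+j\le n$ any antichain has pairwise distinct first coordinates and hence length at most $n+1$. Therefore Theorem~\ref{thm:pfkept} and its corollary apply with this $\overline M=n+1\le N$: every objective vector present in a combined population $R_t$ is, in every later generation, weakly dominated by some population member, and in particular once a Pareto-optimal vector has been realized in the population it stays realized forever. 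This is the only point where the sharper population bound enters, and it is where the truthful crowding distance pays off: by Lemma~\ref{lem:dCDp} the population never needs to reserve slots for crowding-distance-zero duplicates, so no constant-factor slack on $N$ is required, unlike in the classic-\NSGA analyses.

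The second ingredient is property~(ii) of Section~\ref{sec:many}: for any fixed $x\in P_t$ the probability that some offspring of the current iteration is generated as $\mutate(x)$ is $\Omega(1)$ (it is $1$ for fair selection, and $\Theta(1)$ for random selection with crossover rate $p<1$). Combined with the $\Theta(1/n)$ probability that a standard bit-wise mutation flips exactly one prescribed bit and no other, we obtain: for any fixed $x\in P_t$ and any fixed bit position, with probability $\Omega(1/n)$ \emph{per iteration} some offspring equals $x$ with exactly that bit flipped. (The point, as already stressed in Section~\ref{sec:many}, is that this probability is $\Omega(1/n)$ and not $\Omega(1/(nN))$, because the $N$ offspring are generated independently, so the cost of hitting a prescribed parent is absorbed.) With these two facts, the runtime follows from the fitness-level method in two phases of $O(n^2)$ expected iterations each. \emph{Phase 1 (reaching the front):} writing $\ell_{\max}(P_t)=\max\{\mathrm{LO}(x)\mid x\in P_t\}$, note that a $\mathrm{LO}$-maximal individual can only be dominated by $\mathrm{LO}$-maximal ones, so $P_t$ always contains a non-dominated $x^\star$ with $\mathrm{LO}(x^\star)=\ell_{\max}$, and by the first ingredient $\ell_{\max}(P_t)$ is non-decreasing; if $\ell_{\max}=\ell<n$ then bit $\ell+1$ of $x^\star$ is $0$, so with probability $\Omega(1/n)$ some offspring equals $x^\star$ with only that bit flipped and has $\mathrm{LO}\ge\ell+1$, whence (Theorem~\ref{thm:pfkept} applied to this offspring in $R_t$) the next population contains an individual of $\mathrm{LO}$-value at least $\ell+1$; hence $\ell_{\max}$ reaches $n$, i.e.\ $1^n$ is found, in expected $O(n^2)$ iterations. \emph{Phase 2 (covering the front):} once $1^n\in P_t$, let $a_t\in[0..n]$ be minimal such that $(a,n-a)$ is realized in $P_t$ for all $a\in[a_t..n]$; by the first ingredient $a_t$ is non-increasing, and if $a_t=a>0$ then the individual realizing $(a,n-a)$ is the string $1^a0^{n-a}$, flipping its bit $a$ yields the Pareto point $(a-1,n-a+1)$, this offspring appears with probability $\Omega(1/n)$, and thereafter $a_t\le a-1$; hence $a_t$ reaches $0$ in a further expected $O(n^2)$ iterations, at which point the whole Pareto front is covered. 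The same reasoning applies to the sequential \NSGAD, since Theorem~\ref{thm:pfkept} and property~(ii) hold for it as well.

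The main obstacle, or rather the one genuinely new step, is the identity $\overline M=n+1$ for \lotz, which is what makes Theorem~\ref{thm:pfkept} effective at the minimal population size $N=n+1$; everything else is a routine transcription of the established classic-\NSGA argument, which — exactly as for the many-objective benchmarks — uses only the two properties above. A secondary point requiring care is the $\Omega(1/n)$ per-iteration success probability under random selection, where one must use the independence of the $N$ offspring so that the probability of selecting the desired parent is absorbed rather than multiplied into the bound.
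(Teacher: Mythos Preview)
Your proposal is correct and follows essentially the same approach as the paper: verify that $\overline M=n+1$ for \lotz so that Theorem~\ref{thm:pfkept} applies at $N\ge n+1$, verify the $\Omega(1)$ per-iteration parent-selection property, and then invoke the standard two-phase \lotz analysis. The paper is even more terse---it simply states that the result follows ``using the same arguments as in the previous section'' (i.e., the two properties plus the analysis of~\cite{WiethegerD24}, which for $m=2$ yields $O(n^2)$)---so your explicit two-phase write-up is just a spelled-out version of what the paper delegates to the cited literature; the one minor redundancy is the ``non-dominated $x^\star$'' remark in Phase~1, which is not needed since any $x^\star\in P_t$ is eligible as a parent regardless of its front.
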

\begin{theorem}
    Let $k\in[1..n/2]$. Consider using the (sequential)  \NSGAD with problem size $N\ge n-2k+3$, fair or random selection, standard bit-wise mutation with mutation rate $1/n$, and possibly crossover with rate less than one in the case of random selection, to optimize $\ojzj_k$. Then in an expected number of $O(n^k)$ iterations, the full Pareto front of $\ojzj_k$ is covered.
\end{theorem}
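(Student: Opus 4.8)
The plan is to prove this theorem in exactly the way the preceding theorems of this subsection (and Theorem~\ref{thm:momm}) are obtained: by verifying that the two abstract properties isolated in~\cite[Section~5]{WiethegerD24}, and recalled just before Theorem~\ref{thm:momm}, hold for the (sequential) \NSGAD on $\ojzj_k$, and then invoking the problem-specific runtime analysis of~\cite{WiethegerD24} for $\mojzj_k$ specialized to $m=2$. The only genuinely new ingredient is to identify the quantity $\overline M$ from Theorem~\ref{thm:pfkept} for $\ojzj_k$, since the claim is that the minimal conceivable population size $N=n-2k+3$ (one individual per Pareto front vector) already works.

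First I would determine $\overline M$. The Pareto front of $\ojzj_k$ consists of the $n-2k+3$ objective vectors of coordinate sum $n+2k$, so $\overline M\ge n-2k+3$. For the upper bound, write $v(i)$ for the objective vector of a solution with $i$ ones: $v(i)=(k+i,k+n-i)$ for $i\in[k..n-k]$ (the inner Pareto front, whose vectors all have coordinate sum $n+2k$ and are hence pairwise incomparable), $v(0)=(k,k+n)$, $v(n)=(k+n,k)$, $v(i)=(k+i,i)$ for $i\in[1..k-1]$, and $v(i)=(n-i,k+n-i)$ for $i\in[n-k+1..n-1]$. A short case distinction shows that every non-Pareto-optimal vector $v(i)$ is strictly dominated by an inner-front vector and, more importantly, that adding any such vector to an antichain forces the removal of all inner-front vectors; hence no antichain exceeds $n-2k+3$, so $\overline M=n-2k+3$. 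Thus $N\ge n-2k+3=\overline M$, so Theorem~\ref{thm:pfkept} applies and property~(i) of~\cite{WiethegerD24} (a solution value is never lost except when replaced by a dominating one) holds. Property~(ii) (each population member is, per iteration, chosen as a parent and subjected to bit-wise mutation with probability at least a positive constant) holds for fair and for random selection with crossover rate $p<1$; this is exactly the argument given before Theorem~\ref{thm:momm} and is independent of the benchmark.

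With both properties in hand, the runtime is the one from~\cite{WiethegerD24} for $m=2$. For orientation: in a first phase the population comes to cover the inner Pareto front — starting from a random population, one reaches the inner front from a nearby solution (if needed) via dominance-improving single-bit flips, then spreads along it by single-bit flips between neighbouring inner-front vectors, never losing a Pareto vector already present by property~(i); this takes $O(n\log n)$ expected iterations. In a second phase the two extreme vectors $v(0)$ and $v(n)$ are created: from the inner-front boundary individuals with $k$ resp.\ $n-k$ ones (present and retained after phase~1 by property~(i)), a mutation flipping exactly the $k$ ``wrong'' bits produces $v(0)$ resp.\ $v(n)$ with probability $\Theta(n^{-k})$, and since by property~(ii) these individuals are each mutated $\Omega(1)$ times per iteration in expectation, each extreme appears within $O(n^k)$ expected iterations and is then kept forever. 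Summing the phases gives the claimed $O(n^k)$ expected iterations (the phase-1 cost being of lower order for $k\ge 2$).

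The hard part, and the only part that is not a mechanical transfer of the framework of~\cite{WiethegerD24}, is the first step: showing $\overline M=n-2k+3$, and in particular ruling out that some mixture of plateau-boundary vectors and inner-front vectors forms an antichain larger than the Pareto front. If $\overline M$ were larger, Theorem~\ref{thm:pfkept} would not guarantee the loss-free behavior at the minimal population size $N=n-2k+3$ that the theorem asserts, so this antichain bound is precisely what makes the small population size admissible.
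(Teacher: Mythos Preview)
Your approach is correct and is exactly what the paper does: it explicitly states that the bi-objective theorems are obtained ``using the same arguments as in the previous section,'' i.e., by checking properties~(i) and~(ii) of~\cite{WiethegerD24} (property~(i) via Theorem~\ref{thm:pfkept}) and then quoting the $\mojzj_k$ bound of~\cite{WiethegerD24} with $m=2$. The paper gives no separate proof, so your write-up is already more detailed than what appears there.

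One point to tighten in your $\overline M$ argument: from ``adding any non-Pareto-optimal vector forces the removal of all inner-front vectors'' you cannot directly conclude ``hence no antichain exceeds $n-2k+3$.'' You still need to bound antichains that contain \emph{no} inner-front vector. This is easy --- the left-gap vectors $(k+i,i)$, $i\in[1..k-1]$, form a chain, as do the right-gap vectors; moreover $v(n)$ dominates every left-gap vector and $v(0)$ dominates every right-gap vector --- so any such antichain has at most two elements, well below $n-2k+3\ge 3$. With this one extra line your $\overline M=n-2k+3$ computation is complete, and it is precisely the ingredient that lets the theorem assert the minimal population size $N\ge n-2k+3$ rather than merely $N\ge\overline M$ with $\overline M$ left unspecified (as in the many-objective version).
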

\begin{theorem}
    Consider using the (sequential) \NSGAD with problem size $N\ge n+1$, fair or random selection, standard bit-wise mutation with mutation rate $1/n$, and possibly crossover with rate less than one in the case of random selection, to optimize \DLTB. Then in expected $O(n^3)$ iterations, the full Pareto front of the \DLTB is covered.
\end{theorem}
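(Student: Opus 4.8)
The plan is to reuse, for the bi-objective deceptive benchmark \DLTB, exactly the reduction that drives the many-objective results of Section~\ref{sec:many}: first establish that the \NSGAD, started at a population size equal to the Pareto front size, never discards a good objective vector, and then import the \DLTB-specific runtime estimate already known for the classic \NSGA, which relies only on this no-loss property and on each population member being mutated with constant probability per generation.

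\emph{Step 1 (no-loss via Theorem~\ref{thm:pfkept}).} I would first check that for \DLTB every set of pairwise incomparable search points has size at most $n+1$, the bound being attained by the Pareto front $F^*$ itself; hence $\overline{M}=n+1$. This is the one genuinely problem-specific part of the no-loss argument: one has to describe the joint range of the two \DLTB objectives and, in particular, rule out large antichains built from the ``deceptive'' intra-block configurations. Granting this, the hypothesis $N\ge n+1=\overline{M}$ of Theorem~\ref{thm:pfkept} is met, so the theorem gives that for every $x\in R_t$ all later combined populations contain some $y\succeq x$; by the corollary following it, once a solution with a given Pareto-optimal value has been created it stays in the population forever. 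It is precisely here that we gain over the earlier analyses of the classic \NSGA, which needed $N$ larger by a constant factor to prevent the classic crowding distance from dropping front points; our argument uses nothing about the truthful crowding distance beyond Lemma~\ref{lem:dCDp}.

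\emph{Step 2 (parent selection and the borrowed runtime bound).} As noted in Section~\ref{sec:many}, for fair selection every current individual deterministically produces one standard-bit-wise-mutation offspring, and for random selection with crossover rate $p<1$ a fixed individual is the sole mutation parent of a fixed offspring with probability $(1-p)/N$; in either case a fixed present individual undergoes a prescribed two-bit flip in some offspring of the generation with probability $\Omega(1/n^2)$, which is all the runtime analysis needs. With the no-loss property and this selection behaviour, the existing analysis of the classic \NSGA on \DLTB (in the bi-objective work cited for the classic \NSGA) transfers unchanged: it partitions the run into the creation of a first Pareto-optimal individual and the subsequent discovery of the remaining front points, and in both parts the relevant moves are single mutations -- chiefly two-bit flips across a deceptive block, of probability $\Theta(1/n^2)$ -- applied to individuals that the no-loss property guarantees to be available. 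Spread over the $O(n)$ relevant levels (the deceptive blocks in the first phase, the $n+1$ front points in the second), this yields $O(n^2)$ expected iterations per level and the claimed $O(n^3)$ total, matching the classic \NSGA but already from population size $N=n+1$ on.

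\emph{Expected main obstacle.} The subtle point is the claim in Step~2 that the prior \DLTB analysis uses only (i) the no-loss property and (ii) the $\Omega(1/N)$ mutation-selection probability, and in particular that its treatment of the deceptive local optima does not covertly exploit how the classic crowding distance thins out the critical front. One therefore has to revisit that analysis and verify that statements such as ``there is always a population member from which progress toward the front is possible'' are derived from the no-loss property alone -- which, by Theorem~\ref{thm:pfkept}, we obtain already at the smaller size $N=n+1$ -- and not from any diversity side-effect of the secondary criterion. Once this is confirmed, only routine drift and coupon-collector calculations remain.
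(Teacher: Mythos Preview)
Your proposal is correct and follows essentially the same route as the paper: invoke Theorem~\ref{thm:pfkept} with $\overline{M}=n+1$ to obtain the no-loss property at population size $N\ge n+1$, observe that each individual is mutated with constant probability per generation under the stated selection/variation settings, and then import the existing \DLTB runtime analysis from~\cite{ZhengLDD24}, which indeed uses only these two ingredients. The paper itself gives no more than this reduction (it simply states ``using the same arguments as in the previous section''), so your write-up is, if anything, more explicit about the problem-specific check that the maximum antichain size equals the Pareto front size and about which mutation events drive the $O(n^3)$ bound.
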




\section{Approximation Ability and Runtime}\label{sec:appr}

In Section~\ref{sec:many}, we proved that the standard and sequential \NSGAD can efficiently optimize the many-objective \mojzj, \momm, \mcocz, and \mlotz benchmarks as well as the popular bi-objective \ojzj, \omm, \cocz, \lotz, and \DLTB benchmarks. In this section, we will consider the approximation ability when the population size is too small to cover the full Pareto front. We will prove that the sequential \NSGAD has a slightly better approximation performance than the sequential \NSGA and the steady-state \NSGA for \omm~\cite{ZhengD24approx}. 

We note that there are no proven approximation guarantees for non-sequential variants of the \NSGA (except for the steady-state version) so far and the mathematical results in~\cite{ZhengD24approx} suggest that such results might be difficult to obtain. For that reason, we do not aim at such results for the truthful \NSGA. We also note that so far there is no theoretical study on the approximation ability of the \NSGA other than for the (bi-objective) \omm benchmark~\cite{ZhengD24approx}. We shall therefore also only consider this problem. We expect that results for larger numbers of objectives or other benchmarks need considerably new methods as already the approximation measure \mei is may not be suitable then. 

The following lemma gives a useful criterion for individuals surviving into the next generation.
\begin{lemma}
    Consider using the sequential \NSGAD with population size $N\ge 2$ to optimize \omm with problem size $n$. Assume that the two extreme points $0^n$ and $1^n$ are in the population $P_{t_0}$. Then for any generation $t\ge t_0$, in Steps~\ref{stp:start} to~\ref{stp:end} of Algorithm~\ref{alg:NSGAD}, any individual with truthful crowding distance more than $\frac{4}{N-1}$ (including two extreme points) will survive to~$P_{t+1}$. 
    \label{lem:surdCD}
\end{lemma}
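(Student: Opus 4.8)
The plan is to exploit the rigid structure of \omm. Since $f_1(x)+f_2(x)=n$ for every $x$, no solution strictly dominates another, so after non-dominated sorting the combined population is a single front $F_1=R_t$ of size $2N$, we have $i^*=1$, and exactly $N_r=N$ individuals are removed in Steps~\ref{stp:start} to~\ref{stp:end}. Writing $v(x):=f_1(x)$, and noting that $0^n$ and $1^n$ are the unique preimages of the objective values $(0,n)$ and $(n,0)$, I first argue that both extreme points stay in the population for all $t\ge t_0$: they are the first elements of the two sortings and hence have $\tcd=+\infty$; at most two individuals can be such a first element, and before each of the $N$ removals at least $N+1\ge 3$ individuals are present, so the removed individual always has finite $\tcd$ and the two extreme points are never touched. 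By induction on $t$ (the base case $t=t_0$ holds by hypothesis) this shows $0^n,1^n\in P_t$ for all $t\ge t_0$, and consequently the normalization denominators of the truthful crowding distance always equal $f_i(S_{i.1})-f_i(S_{i.|S|})=n$, so that $d(x,y)=\frac{2\,|v(x)-v(y)|}{n}$ throughout.

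The core of the proof combines a monotonicity observation with a counting bound. Along the sequential removal, for a surviving individual $x=S_{i.j}$ the contribution $\tcd_i(x)=\min_{k<j} d(S_{i.k},S_{i.j})$ can only increase when an individual is deleted, since deletion only removes candidates from this minimum (the relative order of the survivors being unchanged) or promotes $x$ to the first element of some sorting, giving $+\infty$. Hence $\tcd(x)$ is non-decreasing during Steps~\ref{stp:start} to~\ref{stp:end}, and it suffices to prove the per-step claim: whenever the current survivor set $T$ satisfies $|T|>N$, some individual of $T$ has current truthful crowding distance at most $\frac{4}{N-1}$. Indeed, this forces the individual removed at that step (the argmin, ties broken randomly) to have $\tcd\le\frac{4}{N-1}$; and since by monotonicity any individual whose crowding distance computed in Step~\ref{ste:inid} (with respect to $F_{i^*}=R_t$) exceeds $\frac{4}{N-1}$ keeps a strictly larger value forever, no such individual is ever removed, which is the assertion of the lemma (the extreme points being covered by their value $+\infty$).

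To prove the per-step claim, fix a removal step; here $|T|\ge N+1$. If two individuals of $T$ have the same objective value, then Lemma~\ref{lem:dCDp} (applied to $T$, whose stored crowding distances are kept consistent with the current set) already yields one with $\tcd=0\le\frac{4}{N-1}$. Otherwise the $|T|=\ell\ge N+1$ objective values are pairwise distinct; list them as $n=u_1>u_2>\dots>u_\ell=0$ (the endpoints forced by the extreme points) and let $y_p$ be the unique individual with $v(y_p)=u_p$. For an interior index $p\in[2..\ell-1]$, the individual $y_p$ is not a first element, its predecessor in the $f_1$-sorting is $y_{p-1}$ and in the $f_2$-sorting is $y_{p+1}$, so $\tcd(y_p)=\frac{2(u_{p-1}-u_p)}{n}+\frac{2(u_p-u_{p+1})}{n}=\frac{2(u_{p-1}-u_{p+1})}{n}$. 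Summing over $p$ telescopes to $\sum_{p=2}^{\ell-1}\tcd(y_p)=\frac{2}{n}(u_1+u_2-u_{\ell-1}-u_\ell)\le\frac{2}{n}\cdot 2n=4$, so the smallest of these $\ell-2\ge N-1$ values satisfies $\tcd(y_p)\le\frac{4}{\ell-2}\le\frac{4}{N-1}$, completing the claim.

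The step I expect to require the most care is the tie handling in the last paragraph: one must combine correlated tie-breaking with Lemma~\ref{lem:dCDp} to identify the unique individual carrying a positive crowding-distance contribution for each objective value and, in the no-tie case, to confirm that the predecessor of $y_p$ in each sorted list really is the individual of the neighbouring value class, so that the clean telescoping identity holds. One should also verify that the crowding distances stored during the sequential removal agree with $\tcd$ computed on the current set, so that both Lemma~\ref{lem:dCDp} and the monotonicity step apply. The remaining ingredients — the single-front structure of $R_t$, the value $N_r=N$, and the telescoping computation — are routine.
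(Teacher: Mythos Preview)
Your proof is correct and follows essentially the same strategy as the paper: monotonicity of $\tcd$ along the sequential removals, survival of the two extreme points, and a sum bound of~$4$ on the finite truthful crowding distances over the current survivor set, yielding a minimum value of at most $\frac{4}{N-1}$ since at least $N-1$ summands are present.

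The only difference is cosmetic. You split the per-step argument into a tie case (handled via Lemma~\ref{lem:dCDp}) and a no-tie case (where the distinct values $u_1>\dots>u_\ell$ give a clean two-term telescope). The paper instead treats both cases uniformly: it writes, for every $x=s_i^1=s_j^2$ not a first element, the identity $\tcd(x)=\frac{2(f_1(s_{i-1}^1)-f_1(s_i^1))}{n}+\frac{2(f_2(s_{j-1}^2)-f_2(s_j^2))}{n}$ (the immediate predecessor always realises the minimum for \omm, contributing~$0$ when tied) and then telescopes over the two sorted lists separately, obtaining a bound of~$4$ after discarding two nonpositive gap terms. Your case split buys a shorter telescoping computation; the paper's version avoids invoking Lemma~\ref{lem:dCDp} and needs no case distinction. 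Both rest on the same idea.
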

\begin{proof}
  Consider some iteration $t \ge t_0$. Let $R$ denote the combined parent and offspring population. We recall that $P_{t+1}$ is constructed from $R$ by sequentially removing individuals with the smallest current $\dCD$-value. By definition, the removal of an individual will not decrease the truthful crowding distance of the
  remaining individuals.
  In particular, individuals that initially have an infinite truthful crowding distance or have a crowding distance of at least $\frac{4}{N-1}$ will keep this property throughout this iteration. 
  
  It is not difficult to see that there are exactly one copy of $0^n$ and of $1^n$ with infinite truthful crowding distance. Since $N\ge 2$, both individuals will be kept to the next and all future generations. 
  
  Now consider $R$ at some stage of the sequential selection process towards $P_{t+1}$, that is, with some individuals already removed. Let $r := |R|$ and let $s_1^1,\dots,s_r^1$ and $s_1^2,\dots,s_r^2$ be the two lists representing $R$ w.r.t. decreasing values of $f_1$ and $f_2$, respectively. Let $j_1\in[1..r]$ be the position of $s_1^1$ in the sorted list w.r.t. $f_2$, that is, $s^2_{j_1} = s^1_1$. Likewise, let $i_1\in[1..r]$ be the position of $s_1^2$ in the sorted list w.r.t. $f_1$, that is, $s^1_{i_1} = s^2_1$. For any $x\in R$, we have unique $i,j\in[1..r]$ such that $x=s_i^1=s_j^2$. 
    Since $0^n$ and $1^n$ are in $R$, then $(s_1^1,s_r^1)=(1^n,0^n)$ and $(s_1^2,s_r^2)=(0^n,1^n)$, and for $x$ with $i\in[2..r]\setminus\{i_1\}$ and $j\in [2..r]\setminus \{j_1\}$, we have
    \begin{align*}
        \dCD(x)={}&{}\left(\frac{f_1(s_{i-1}^1)-f_1(s_{i}^1)}{n}+\frac{f_2(s_{i}^1)-f_2(s_{i-1}^1)}{n}\right)\\
        &{}+\left(\frac{f_1(s_j^2)-f_1(s_{j-1}^2)}{n}+\frac{f_2(s_{j-1}^2)-f_2(s_j^2)}{n}\right).
    \end{align*}
    Noting that $s_1^1 \ne s_1^2$ since $s_1^1=1^n, s_1^2=0^n$, we
    compute
    \begin{align*}
    \sum_{x\in R\setminus\{s_1^1,s_1^2\}}{}&{}\dCD(x)=  \sum_{i\in[2..r]\setminus\{i_1\}} \left(\frac{f_1(s_{i-1}^1)-f_1(s_{i}^1)}{n}+\frac{f_2(s_{i}^1)-f_2(s_{i-1}^1)}{n}\right)\\
    &{}+ \sum_{j\in[2..r]\setminus\{j_1\}} \left(\frac{f_1(s_j^2)-f_1(s_{j-1}^2)}{n}+\frac{f_2(s_{j-1}^2)-f_2(s_j^2)}{n}\right)\\
    ={}&{}\left(\sum_{i=2}^{i_1-1}+\sum_{i=i_1+1}^{r}\right) \left(\frac{f_1(s_{i-1}^1)-f_1(s_{i}^1)}{n}+\frac{f_2(s_{i}^1)-f_2(s_{i-1}^1)}{n}\right)\\
    &{}+ \left(\sum_{j=2}^{j_1-1} + \sum_{j=j_1+1}^{r}\right) \left(\frac{f_1(s_j^2)-f_1(s_{j-1}^2)}{n}+\frac{f_2(s_{j-1}^2)-f_2(s_j^2)}{n}\right)\\
    ={}&{}\frac{(f_1(s_1^1)-f_1(s_{i_1-1}^1))+(f_2(s_{i_1-1}^1)-f_2(s_{1}^1))}{n}\\
    &{}+\frac{(f_1(s_{i_1}^1)-f_1(s_{r}^1))+(f_2(s_{r}^1)-f_2(s_{i_1}^1))}{n}\\
    &{}+\frac{(f_1(s_{j_1-1}^2)-f_1(s_{1}^2))+(f_2(s_{1}^2)-f_2(s_{j_1-1}^2))}{n}\\
    &{}+\frac{(f_1(s_r^2)-f_1(s_{j_1}^2))+(f_2(s_{j_1}^2)-f_2(s_{r}^2))}{n}\\
    ={}&{}\frac{f_1(s_1^1)-f_1(s_{r}^1)+f_1(s_{i_1}^1)-f_1(s_{i_1-1}^1)}{n}\\
    &{}+\frac{f_2(s_{r}^1)-f_2(s_{1}^1)+f_2(s_{i_1-1}^1)-f_2(s_{i_1}^1)}{n}\\
    &{}+\frac{f_1(s_r^2)-f_1(s_{1}^2)+f_1(s_{j_1-1}^2)-f_1(s_{j_1}^2)}{n}\\
    &{}+\frac{f_2(s_{1}^2)-f_2(s_{r}^2)+f_2(s_{j_1}^2)-f_2(s_{j_1-1}^2)}{n}\\
    \le{}&{} \frac{f_1(s_1^1)-f_1(s_{r}^1)+f_2(s_{r}^1)-f_2(s_{1}^1)}{n}\\
    &{}+\frac{f_1(s_r^2)-f_1(s_{1}^2)+f_2(s_{1}^2)-f_2(s_{r}^2)}{n}\\
    ={}&{} 4,
    \end{align*}
    where the last inequality uses $f_1(s_{i_1-1}^1) \ge f_1(s_{i_1}^1)$ and $f_2(s_{j_1-1}^2)\ge f_2(s_{j_1}^2)$ due to the sorted lists, and further $f_2(s_{i_1-1}^1) \le f_2(s_{i_1}^1)$ and $f_1(s_{j_1-1}^2)\le f_1(s_{j_1}^2)$ since in a bi-objective incomparable set, any sorting 
    with respect to the first objective is a sorting in inverse order for the second objective. 
     Since $|R\setminus\{s_1^1,s_1^2\}|\ge N-1$, we know that at least one of individuals in $R\setminus\{s_1^1,s_1^2\}$ will have its $\dCD\le \frac{4}{N-1}$, and thus any individual with its $\dCD>\frac{4}{N-1}$ will not be removed.
\end{proof}

The following lemma shows that once the two extreme points are in the population, a linear runtime suffices to obtain a good approximation of the Pareto front of \omm.

\begin{lemma}\label{lem:approx}
    Consider using the sequential \NSGAD with population size $N\ge 2$, fair or random parent selection, one-bit mutation or standard bit-wise mutation, crossover with constant rate below $1$ or no crossover, to optimize \omm with problem size $n$. Let $L:=\max\{\frac{2n}{N-1},1\}$. Assume that the two extreme points $0^n$ and $1^n$ are in the population for the first time at some generation $t_0$. Then after $O(n)$ more iterations (both in expectation and with high probability), the population will have its an \mei value of at most $L$. It remains in this state for all future generations. 
\end{lemma}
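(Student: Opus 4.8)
The plan is to follow the set of objective values present in the population. Since for \omm the objective vector of $x$ is $(|x|_1,n-|x|_1)$, the set $V(P):=\{|x|_1 : x\in P\}\subseteq[0..n]$ determines $f(P)$ up to multiplicities, and, recalling that $0^n,1^n\in P$ (so $0,n\in V(P)$), the condition $\mei(P)\le L$ means precisely that any two consecutive elements of $V(P)$ differ by at most $L$. I would introduce the potential $\Phi(P):=|\{v\in[1..n-1]: v \text{ lies in a gap of } V(P) \text{ of length}>L\}|$, so that $\Phi(P)\le n-1$, $\Phi(P)=0\iff\mei(P)\le L$, and $\Phi(P_{t_0})\le n-1$. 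The argument then has two parts: (a) $\Phi_t:=\Phi(P_t)$ is non-increasing for $t\ge t_0$ (which yields the final ``it remains in this state'' sentence), and (b) while $\Phi_t\ge1$, each iteration decreases $\Phi$ by at least one with probability $\Omega(1)$.

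For part (a) I would first record the shape of the truthful crowding distance on \omm: by Lemma~\ref{lem:dCDp} and a direct computation --- using that both normalizing spans equal $n$ because $0^n$ and $1^n$ are present and, by Lemma~\ref{lem:surdCD}, stay forever --- the unique positive representative of a value $v$ has $\tcd=\tfrac2n(v^+-v^-)$, where $v^-<v<v^+$ are the neighbouring values present, every other individual of value $v$ has $\tcd=0$, and $0^n,1^n$ have $\tcd=\infty$. Two consequences: (i) a value leaves $V$ only when its representative is removed, since the zero-$\tcd$ copies of the same value are always removed before it; and (ii) by (the proof of) Lemma~\ref{lem:surdCD} the individual removed at any step of the sequential selection has current truthful crowding distance at most $\tfrac4{N-1}$, so when $v$ is dropped its then-current neighbours satisfy $v^+-v^-=\tfrac n2\tcd\le\tfrac{2n}{N-1}\le L$. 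Hence dropping a value only merges two gaps into one of length at most $L$, which does not increase $\Phi$; and generating offspring can only refine existing gaps, which cannot increase $\Phi$ either. Processing one iteration (offspring generation, then the removals one at a time) therefore gives $\Phi(P_{t+1})\le\Phi(R_t)\le\Phi_t$; in particular, once $\Phi_t=0$ it stays $0$.

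For part (b), if $\Phi_t\ge1$ there is a gap $(a,b)$ of $V(P_t)$ with $b-a>L\ge1$. Selecting the individual of value $a$ as a parent and having its mutation flip exactly one $0$-bit produces an offspring of value $a+1\in(a,b)$, splitting this long gap, so $\Phi(R_t)\le\Phi_t-1$, and by part (a) this gain is not undone later in the iteration; symmetrically, the individual of value $b$ flipping exactly one $1$-bit yields value $b-1$. For standard bit-wise mutation and for one-bit mutation these two events have probabilities $\Omega((n-a)/n)$ and $\Omega(b/n)$ respectively (the hidden constant absorbing the probability of choosing the given individual as a parent under fair or random selection and of not performing crossover), and since $a<b$ we have $\max\{n-a,b\}\ge n/2$, so with probability $\Omega(1)$ at least one of them occurs. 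Thus $\Phi$ is stochastically dominated by a process starting at most $n-1$ that decreases by one in each step with probability $\Omega(1)$ until it hits $0$; this happens within $O(n)$ iterations in expectation and, since it amounts to collecting $\Theta(n)$ independent successes, also with high probability. Combining (a) and (b) shows that $\mei\le L$ is attained within $O(n)$ iterations after $t_0$, in expectation and with high probability, and is preserved thereafter.

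The main obstacle is part (a) --- arguing that the sequential removal never destroys the spread already achieved. It relies on the explicit \omm formula for the truthful crowding distance and on the fact that the positive representative of a value is always the last copy of that value to be removed; together these convert the survival threshold $\tfrac4{N-1}$ of Lemma~\ref{lem:surdCD} into the geometric statement ``a value disappears only when its two neighbours are already within distance $L$'', which is exactly what makes $\Phi$ monotone. Part (b) and the concentration statement are then routine drift and coupon-collector estimates.
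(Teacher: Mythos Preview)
Your proof is correct and follows essentially the same route as the paper: the paper likewise converts Lemma~\ref{lem:surdCD} into the statement that when a value $v$ is deleted the merged interval has length $\tcd(x)\cdot n/2\le 2n/(N-1)\le L$, giving the monotonicity of the approximation quality, and then defers the progress argument to \cite[Lemma~14]{ZhengD24approx}. Your global potential $\Phi$ is a slightly different (and more self-contained) bookkeeping device than the paper's per-half-integer interval tracking, but the underlying mechanism---the $\tcd$ formula $\tfrac{2}{n}(v^+-v^-)$ on \omm together with the survival threshold $\tfrac{4}{N-1}$---is identical.
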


\begin{proof}[Proof sketch]
    We use similar argument as in the proof of~\cite[Lemma~14]{ZhengD24approx} and only show the difference here. Let $i\in[0..n-1], t\ge t_0$. Let $X_t$ and $Y_t$ be the lengths of the empty intervals containing $i+0.5$ in $f(P_t)$ and in $f(R_t)$, respectively. We first show that if $Y_t\le M$ for some $M\ge L$, then $X_{\tau} \le M$ for all $\tau >t$.
    If not, since $Y_t\le X_t$, w.l.o.g., we assume that $X_{t+1}>M$. Let $x$ be the individual whose removal lets the length of the empty interval containing $i+0.5$ increase from a value of at most $M$ to a value larger than~$M$. Then before the removal, $f_1(x)$ must be one of the end points of the empty interval containing $i+0.5$, w.l.o.g, the left end point (the smaller $f_1$ value). 
    We also know that the empty interval containing $i+0.5$ after the removal of $x$ has lengths equal to the truthful crowding distance of $x$ multiplied by $n/2$. Hence, we know that
    \begin{align*}
        \dCD(x) > \frac{2M}{n} \ge \frac{2L}{n}= \frac{4}{N-1},
    \end{align*}
    which is contradicts our insight from Lemma~\ref{lem:surdCD} that an $x$ with $\dCD>4/(N-1)$ cannot be removed.

    The remaining argument about the first time the population has $\mei \le L$ is exactly the same as in the proof of~\cite[Lemma~14]{ZhengD24approx}, except for the case with crossover. Since crossover is used with a constant probability less than one, there is a constant rate of iterations using mutation only. 
    The arguments analyzing the selection are independent of the variation operator (so in particular, the empty interval lengths are non-increasing when at least~$L$). Consequently, by simply ignoring a possible profit from crossover iterations, we obtain the same runtime guarantee as in the mutation-only case.
\end{proof}

Noting that the maximal function value of $f_1(P_t)$ and $f_2(P_t)$ cannot decrease (there always is one individual witnessing this value and having infinite truthful crowding distance), we easily obtain that in expected $O(n\log n)$ iterations both extreme points $0^n$ and $1^n$ are reached for the first time, and also for all future iterations. This can be shown with a proof analogous to the one of~\cite[Lemma~15]{ZhengD24approx}. Therefore, we have the following main result on the approximation ability and runtime of the sequential \NSGAD.
\begin{theorem}
    Consider using the sequential \NSGAD with population size $N\ge 2$, fair or random parent selection, one-bit mutation or standard bit-wise mutation, crossover with constant rate below $1$ or no crossover, to optimize \omm with problem size $n$. Then after an expected number of $O(n\log n)$ iterations, the population contains $0^n$ and $1^n$ and satisfies $\mei\le \max\{\frac{2n}{N-1},1\}$. After that, these conditions will be kept for all future iterations.
\label{thm:nsgadapp}
\end{theorem}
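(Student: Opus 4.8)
The plan is to split the argument into two phases and invoke Lemma~\ref{lem:approx} for the second, so that the only genuinely new work lies in the first phase. \textbf{Phase~1:} show that in an expected number of $O(n\log n)$ iterations the population contains both extreme points $0^n$ and $1^n$, and that it retains them forever. For permanence, observe that $1^n$ is the unique maximizer of $f_1$ and $0^n$ the unique maximizer of $f_2$; whenever such a point lies in $R_t$ it belongs to the first front $F_1$, and by the definition of the truthful crowding distance its (unique) copy with the extreme objective value appears first in every sorting and hence has $\dCD = +\infty$. Since $N\ge 2$, it is never removed in Steps~\ref{stp:start}--\ref{stp:end}, so once found it survives to all future populations (this is also a special case of Theorem~\ref{thm:pfkept}). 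In particular $\max f_1(P_t)$ and $\max f_2(P_t)$ are non-decreasing in $t$. For the hitting time I would follow the proof of~\cite[Lemma~15]{ZhengD24approx}: as long as $1^n$ is not yet present, there is an individual $z\in P_t$ attaining the current value $\max f_1(P_t)$, and in each iteration, with constant probability (namely $1$ for fair selection and $\Theta(1)$ for random selection with crossover rate $p<1$, this being exactly property~(ii) discussed before Theorem~\ref{thm:momm}), $z$ is chosen as a parent and an offspring is generated from it by mutation only; flipping one of the zero-bits of $z$ then strictly increases $\max f_1$. This is a \onemax-type process, hence $O(n\log n)$ expected iterations suffice to reach an individual with $f_1$-value $n$, i.e.\ to find $1^n$; symmetrically for $0^n$, and a union bound over the two events keeps the bound at $O(n\log n)$.

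\textbf{Phase~2:} once $0^n$ and $1^n$ are both in the population, say from generation $t_0$ on, Lemma~\ref{lem:approx} applies directly and gives that after $O(n)$ further iterations (in expectation and with high probability) the population has $\mei \le L := \max\{\tfrac{2n}{N-1},1\}$, and that this property is preserved for all later generations. Combining the two phases, the total expected time until the population simultaneously contains $0^n$ and $1^n$ and satisfies $\mei \le L$ is $O(n\log n) + O(n) = O(n\log n)$. The ``kept for all future iterations'' claim then follows from the two permanence statements: the extreme points are protected by the $+\infty$ truthful crowding distance argument above (equivalently by Theorem~\ref{thm:pfkept}), and $\mei \le L$ is preserved by Lemma~\ref{lem:approx}.

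The main obstacle is the first phase, and it is mild. One must verify that a current-best $f_1$-individual is indeed protected against removal until it is superseded by a strictly better one — so that the \onemax process on $\max f_1(P_t)$ is well-defined and monotone — which is precisely what the infinite-crowding-distance observation and the selection analysis behind Lemma~\ref{lem:surdCD} provide; and one must check that the constant lower bound on the probability of a mutation-only step from a selected parent holds uniformly over all admissible operator combinations (fair or random selection, one-bit or bit-wise mutation, crossover rate below one), which is again property~(ii) already established in the paper. Everything else is a routine transfer of~\cite[Lemmas~14 and~15]{ZhengD24approx}, with the classic crowding distance arguments there replaced by the truthful-crowding-distance arguments of Lemmas~\ref{lem:surdCD} and~\ref{lem:approx}.
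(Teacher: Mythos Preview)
Your proposal is correct and follows essentially the same approach as the paper: establish monotonicity of $\max f_1(P_t)$ and $\max f_2(P_t)$ via the infinite truthful crowding distance of the first sorted element, bound the hitting time of $0^n$ and $1^n$ by a \onemax-type argument analogous to~\cite[Lemma~15]{ZhengD24approx}, and then invoke Lemma~\ref{lem:approx} for the $\mei$ bound and its permanence. One small slip: your parenthetical appeal to Theorem~\ref{thm:pfkept} is misplaced, since that theorem requires $N\ge\overline{M}=n+1$ for \omm whereas here only $N\ge2$ is assumed---but your direct argument (at most two individuals in $F_1$ have $\dCD=+\infty$, and $N\ge2$ guarantees both survive) already does the job without it.
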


Note that the best possible value for the $\mei$ is $\lceil \frac{n}{N-1}\rceil$~\cite{ZhengD24approx}. Theorem~\ref{thm:nsgadapp} shows that the sequential \NSGAD can reach a good approximation guarantee of $\mei$ of the optimal value multiplied by at most only a factor of two. We also note that for the \NSGA with sequential survival selection using the classic crowding distance, an approximation guarantee of $\mei\le\max\{\frac{2n}{N-3},1\}$ (also within $O(n\log n)$ iterations) was shown in~\cite{ZhengD24approx}. The slightly better approximation ability shown above (within the same runtime), as the proof of Lemma~\ref{lem:approx} shows, stems from the fact that our definition of the crowding distance admits at most one individual with infinite crowding distance contribution per objective, whereas the classic crowding distance admits two. 

\section{Experiments}

To complement our theoretical findings, we now show a few experimental results. These meant as illustration of our main (mathematical) results, not as substantial stand-alone results.

\textbf{Computing the full Pareto front, many objectives:} Our main theoretical result was a proof that the \NSGAD can efficiently solve many-objective problems, different from the classic \NSGA, where an exponential lower bound was shown for the \omm problem. To illustrate how the \NSGAD solves many-objective problems, we regard the 4-objective \omm problem. That the \NSGA cannot solve this problem efficiently was shown, also experimentally, in~\cite{ZhengD23many}. We hence study only the (sequential) \NSGAD as algorithm, using random selection, bit-wise mutation, no crossover, and the minimal possible population size $N = M$ (the Pareto front size) and $N = 2M$. We also use the GSEMO toy algorithm. In Figure~\ref{fig:4omm}, we display the median (over 20 runs) number of function evaluations these algorithms took to cover the full Pareto front of $4$-\omm for different problem sizes~$n$. 

\begin{figure}
\centering
\includegraphics[width=0.81\columnwidth]{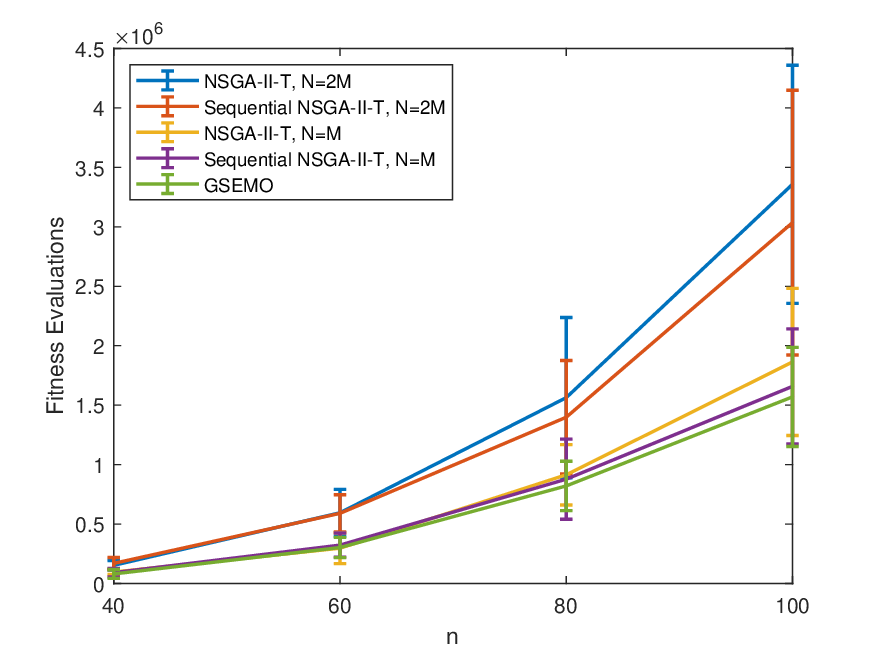}
\caption{Median number (with $1$st and $3$rd quartiles, in 20 runs) of function evaluations to compute the full Pareto front of the $4$-objective \omm problem.}
\label{fig:4omm}
\end{figure}

We observe that all algorithms efficiently find the full Pareto front, in drastic contrast to the results for the classic \NSGA in~\cite[Figure~1 and 2]{ZhengD23many}. Not surprisingly, a larger population size is not helpful, which shows that it is good that the \NSGAD admits smaller population sizes than the classic \NSGA. Also not surprisingly, the sequential versions give slightly better results. The minimal inferiority of the (sequential) \NSGAD (with $N=M$) to the toy GSEMO does not mean a lot given that this algorithm is rarely used in practice.

\textbf{Computing the full Pareto front, two objectives:} We conducted analoguous experiments for two objectives, where a comparison with the \NSGA is interesting. The proven guarantees for the \NSGA require a population size of $N \ge 4M$, where this algorithm is clearly slower than all others regarded by us. We therefore did some preliminary experiments showing that already for $N = 1.5M$ the \NSGA consistently is able to solve our problem instances. The results for this \NSGA, the \NSGAD with optimal population size $N = M$ and with $N = 1.5M$, and the GSEMO are shown in Figure~\ref{fig:ommPF}. With this optimized population size for the \NSGA, all algorithms show a roughly similar performance on the $2$-objective \omm problem, with the \NSGA slightly ahead.

\begin{figure}
\centering
\includegraphics[width=0.81\columnwidth]{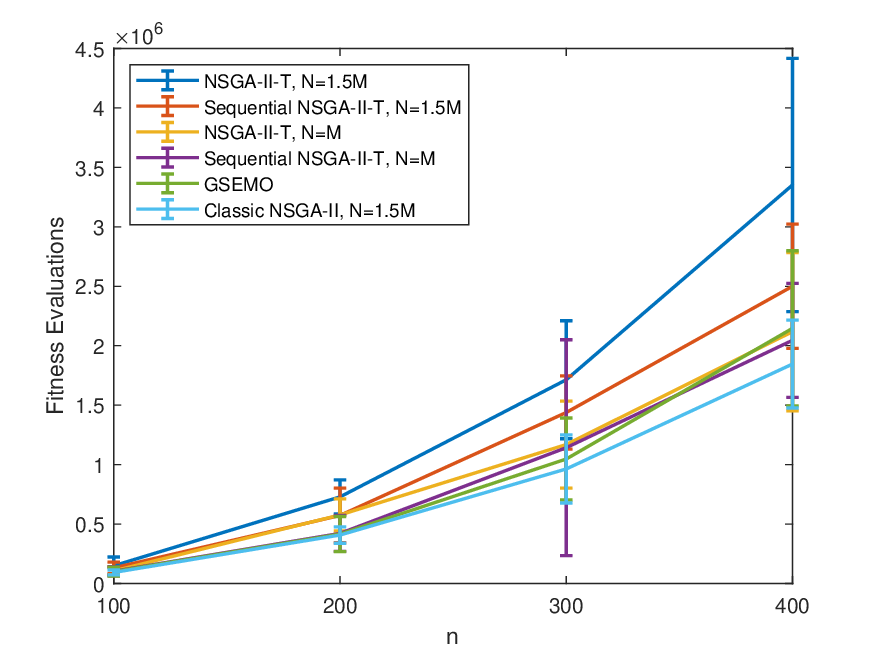}
\caption{The number of function evaluations to cover the full Pareto front for \omm.}
\label{fig:ommPF}
\end{figure}

\textbf{Approximation results:} 
To analyze who well the different \NSGA variants with small population size approximate the Pareto front, we conduct the following experiments. Note that the GSEMO cannot be used for approximative solution and is therefore not included. Following experimental settings in the only previous theoretical work~\cite{ZhengD24approx} on the approximation topic, we regard the bi-objective \omm problem with problem size $n=601$. We use the same algorithms as above (except for the GSEMO), with population sizes $N=\lceil(n+1)/2\rceil(=301),\lceil(n+1)/4\rceil(=151),\lceil(n+1)/8\rceil(=76)$. As before, we measure the approximation quality via the $\mei$. We note that the best possible  $\mei$ values are $3,5,9$ for $N=301,151,76$ respectively. 

As in~\cite{ZhengD24approx}, we regard the approximation quality in two time intervals, namely in iterations $[1..100]$ and $[3001..3100]$ after the two extreme points of the Pareto front have entered the population.

\begin{figure}
\centering
\includegraphics[width=0.81\columnwidth]{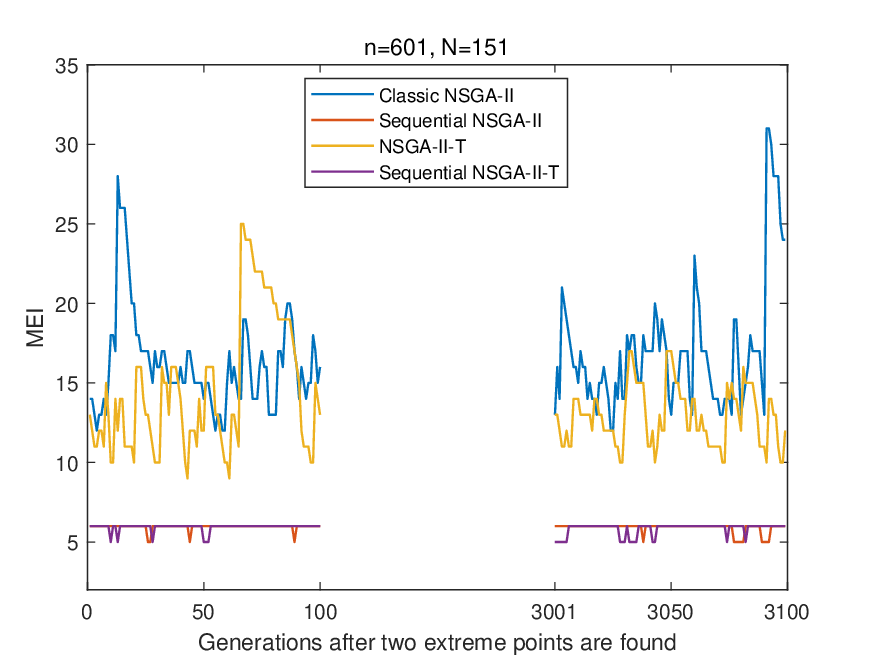}
\caption{The $\mei$ for generations $[1..100]$ and $[3001..3100]$ after the two extreme points were found (one run).}
\label{fig:omm}
\end{figure}

Figure~\ref{fig:omm} shows the $\mei$ values for the different algorithms in a single run, for reasons of space only for $N = 151$ (but the other population sizes gave a similar picture). We clearly see a much better performance of the sequential algorithms, with no significant differences between the classic and the truthful sequential \NSGA.

\section{Conclusion and Future Work}

To overcome the difficulties the \NSGA was found to have in many-objective optimization, we used the insights from several previous theoretical works, most profoundly~\cite{ZhengD23many}, to design a  truthful crowding distance for the \NSGA. Different from the original crowding distance, this new measure has the natural and desirable property that solutions with objective vector far from all others receive a large crowding distance value. The truthful crowding distances are slightly more complex to compute, but asymptotically not more complex than the non-dominated sorting step of the \NSGA. 

Via mathematical runtime analyses on several classical benchmark problems, we prove that the \NSGA with the truthful crowding distance indeed is effective in more than two objectives, admitting the same performance guarantees as previously shown for the harder to use \NSGAthree and the \SMS, which is computationally demanding due to the use of the hypervolume contribution. For the bi-objective benchmarks, for which the classic \NSGA has been analyzed, we prove the same runtime guarantees (however, only requiring the population size to be at least the size of the Pareto front). Similarly, the truthful \NSGA admits essentially the same (that is, minimally stronger) approximation guarantees as previously shown for the classic \NSGA. 

Consequently, the \NSGA with truthful crowding distance overcomes the difficulties of the classic \NSGA in many objective without that we observe disadvantages in two objectives, where the classic \NSGA has shown a very good performance.

%

\newcommand{\etalchar}[1]{$^{#1}$}

}
\end{document}